\newcommand{\cmark}{\ding{51}} 
\newcommand{\xmark}{\ding{55}} 
\theoremstyle{remark}
\newtheorem{remark}{Remark}
\newtheorem{lemma}{Lemma}
\newtheorem{theorem}{Theorem}
\newtheorem{corollary}{Corollary}
\newcommand{\V}{{\bf V}}
\title{GRAFT: Gradient-Aware Fast MaxVol Technique for Dynamic Data Sampling}
\author{%
  *Ashish Jha$^1$ \quad
  *Anh huy Phan$^1$ \quad
  Razan Dibo$^1$ \quad
  Valentin Leplat$^2$ \\
  $^1$Skolkovo Institute of Science and Technology \quad
  $^2$Innopolis University \quad\\
  \texttt{\{ashish.jha,a.phan,razan.dibo\}@skoltech.ru, v.leplat}@innopolis.ru
}
\begin{document}

\maketitle

\begin{abstract}
Training modern neural networks on large datasets is computationally and environmentally costly. We introduce GRAFT, a scalable in-training subset selection method that (i) extracts a low-rank feature representation for each per-iteration batch, (ii) applies a Fast MaxVol sampler to pick a small, diverse subset that spans the batch’s dominant subspace, and (iii) dynamically adjusts the subset size using a gradient-approximation criterion. 
By operating in low-rank subspaces and training on carefully chosen examples instead of full batches, GRAFT preserves the training trajectory while reducing wall-clock time, energy, and $\mathrm{CO}_2$ emissions. 
Across multiple benchmarks, it matches or exceeds recent selection baselines in accuracy and efficiency, while providing a favorable accuracy efficiency emissions trade-off.
\end{abstract}

\section{Introduction}
Deep neural networks are powerful but costly to train, requiring substantial computational resources and energy, raising scalability and environmental concerns. A key driver of this cost is the large volume of training data, which increases optimization steps, memory use, and training time. Data subset selection offers a solution by using a smaller, well-chosen subset to maintain training efficiency while reducing redundancy. This can speed up training, cut energy use, and aid resource-constrained settings. However, many existing methods have drawbacks, such as costly preprocessing, reliance on proxy models, architecture restrictions, or repeated metric evaluations, limiting their flexibility.

In this work, we introduce GRAFT (Gradient-Aware Fast MaxVol Technique), a scalable and lightweight framework designed to integrate subset selection directly into the training loop. GRAFT operates in two main stages - first, it extracts compact feature embeddings from per-iteration batch using a low-rank projection. This step ensures that the high-dimensional representation of the data is mapped onto a lower-dimensional subspace, capturing the most salient features with minimal redundancy. Next, GRAFT performs a fast MaxVol sampling to select a subset of examples that best span the dominant subspace of the projected features. Unlike heuristic-based selection strategies, the MaxVol criterion guarantees that the chosen subset retains maximal expressiveness in terms of subspace coverage. To maintain alignment with the training dynamics, GRAFT introduces an adaptive sampling mechanism that dynamically adjusts the number of selected samples per-iteration batch. This adjustment is guided by measuring the alignment between the gradient computed over the per-iteration subset and the gradient of the entire per-iteration batch. Specifically, GRAFT evaluates the subspace representation of the selected subset to ensure it remains representative of the full batch's learning dynamics. If the alignment deviates, GRAFT automatically increases the subset size to better capture the gradient direction. Conversely, when alignment is strong, GRAFT reduces the subset size to optimize efficiency. This allows GRAFT to preserve training trajectory fidelity, ensuring that critical gradient information is not lost during optimization, even as subset compositions evolve throughout training. Furthermore, GRAFT introduces a fundamentally different mechanism compared to existing subset selection strategies such as GradMatch. GradMatch focuses on gradient matching, where the primary goal is to select a subset $\mathcal{S}$ such that the gradient computed over $\mathcal{S}$ closely approximates the gradient of the entire dataset. This is achieved through a greedy orthogonal matching pursuit (OMP) strategy that incrementally selects samples to minimize the residual gradient error. While effective, gradient matching methods require explicit comparisons of gradient vectors, which can be computationally expensive and sensitive to noise.

In contrast, GRAFT shifts the focus from direct gradient matching to per-iteration subspace approximation combined with gradient alignment. Importantly, GRAFT ensures that this subspace representation remains gradient-aligned with the full per-iteration batch by dynamically adjusting the subset size based on the measured alignment. This mechanism guarantees that the chosen samples not only capture the most informative directions in the data but also maintain the critical gradient signals necessary for effective optimization. As a result, GRAFT achieves improved efficiency by reducing reliance on full-gradient computations while preserving the quality of the training trajectory, leading to more robust and stable convergence.

\begin{figure}[htbp]
\begin{center}
\includegraphics[width=1.0\textwidth]{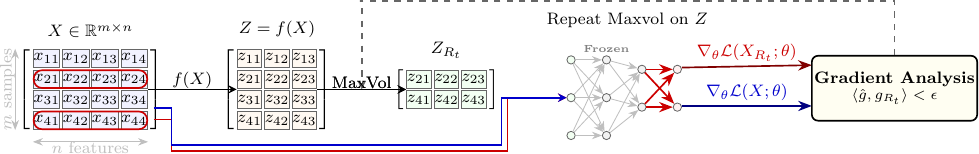}
\end{center}
\caption{Overview of the \textsc{GRAFT} sampling framework. The method consists of two main stages: (1) \textbf{Feature Extraction and Sample Selection}, where input per-iteration batch $X$ is transformed into a low-dimensional feature matrix $Z = f(X)$, followed by MaxVol-based selection of representative rows (samples); and (2) \textbf{Gradient Alignment and Rank Selection}, where the selected subset is evaluated based on how well its gradient aligns with the full per-iteration batch gradient. The selection rank $R$ is dynamically adjusted using a projection error criterion. The final output is a subset of samples indexed by $I = \{i_1, \ldots, i_R\}$ used for gradient updates.}
\label{fig:training}
\end{figure}

This design enables GRAFT to operate efficiently within small localized data, using the local structure to guide training. By selecting representative samples in each iteration, GRAFT reduces the computational effort in subset selection while adapting to the evolving optimization landscape throughout the training. Our experimental results demonstrate that GRAFT achieves strong performance across multiple datasets and architectures, reducing both training time and energy usage with minimal accuracy degradation. These findings position GRAFT as a practical tool for sustainable and efficient deep learning on a scale.

\section{Related Works}\label{Relat_Sec}

Subset selection methods can be broadly categorized into pre-selection and integrated selection approaches. Pre-selection methods identify representative subsets before training. A notable example is EL2N (Early-Loss-to-Norm) scoring, defined as $\text{EL2N}(x_i) = \frac{1}{t}\sum_{k=1}^t |\nabla L(\theta_k; x_i)|^2$, which leverages early-phase gradient norms to approximate sample informativeness \cite{paul2021el2n}. Samples with higher early gradient norms tend to have a stronger impact on parameter updates and loss convergence. Forget Score \cite{toneva2018forget} adopts a different strategy, measuring the misclassification frequency of samples as $F(x_i) = \sum_{k=1}^t \mathbb{1}[y_i \neq \hat{y}_i^{(k)}]$, which favors harder-to-learn examples. However, this method is highly sensitive to noise and class imbalance, potentially overemphasizing mislabeled or ambiguous samples. Selection via Proxy (SVP) \cite{coleman2020selection} introduces a lightweight proxy model $f_\phi$ trained independently to estimate sample informativeness, formulated as $\text{SVP}(x_i) = L(f_\phi(x_i), y_i)$. Although efficient, its effectiveness is tightly coupled to the fidelity of the proxy model, which may not always generalize well to the main model’s learning dynamics. DRoP \cite{vysogorets2025drop} conceptualizes subset selection as a distributionally robust optimization (DRO) problem, expressed as $\min_{\theta} \max_{q \in \mathcal{Q}} \sum_{i=1}^{n} q_i L(f_\theta(x_i), y_i)$. This adversarial perspective enhances robustness against noise, but it introduces significant computational overhead due to adversarial evaluations during each optimization step. One inherent limitation of pre-selection methods is their inability to adapt to evolving training dynamics, often resulting in underfitting or misalignment with the model's learning trajectory. In contrast, integrated selection methods address this shortcoming by adjusting subsets throughout training, aligning selection with gradient-aligned subspace approximations that reflect the current optimization landscape. This adaptability is especially critical in non-convex optimization settings where data importance fluctuates across iterations. Prominent examples include GradMatch \cite{killamsetty2021gradmatch}, which minimizes the gradient discrepancy $|\nabla_\theta \mathcal{L}(S) - \nabla_\theta \mathcal{L}(\mathcal{X})|_2^2$. Despite its theoretical alignment with full-batch gradients, GradMatch suffers from substantial computational overhead during optimization. GLISTER \cite{killamsetty2021glister} and CRAIG \cite{mirzasoleiman2020coresets} frame the subset selection problem as a submodular optimization task, which allows efficient greedy selection but lacks adaptive flexibility to dynamic gradient changes. GRAFT builds upon these limitations by improving scalability and optimization fidelity. It samples from a low-rank subspace, maximizing the spanned volume through Fast MaxVol-based selection, and dynamically adjusts subset sizes to preserve representational capacity throughout training. Empirical evaluations using the CORDS framework demonstrate that methods like GradMatch and GLISTER consistently encounter out-of-memory errors with large batch sizes, highlighting their scalability constraints. Other relevant techniques include BADGE \cite{ash2020badge}, which combines uncertainty with gradient embedding clustering to target informative samples. Moderately hard core-sets \cite{xia2023moderate} aim to select samples of intermediate difficulty, striking a balance between learnability and informativeness. Dataset distillation \cite{zhao2021dataset} attempts to synthesize compact datasets but struggles to maintain representativeness at moderate subset sizes. Hybrid strategies like SelMatch \cite{lee2024selmatch} refine real data selections through gradient matching, conceptually similar to GRAFT, although they require additional meta-optimization layers. SubSelNet \cite{jain2023efficient}, designed specifically for NAS, learns generalized selection policies. However, its reliance on prior task-architecture distributions constrains its generalizability to out-of-distribution scenarios.

\begin{table}[h]
\centering
\scriptsize
\resizebox{0.95\linewidth}{!}{%
\begin{tabular}{lcccccc}
\toprule
\textbf{Method} & \textbf{Batch-wise} & \textbf{Gradient-based} & \textbf{Complexity} & \textbf{Scalability Bottleneck} & \textbf{Dynamic Rank} & \textbf{Memory Usage} \\
\midrule
DRoP & \xmark & \xmark & $\mathcal{O}(n^2d)$ & Quadratic in $n$ (intractable for $n > 10^4$) & \xmark & High \\
GradMatch & \cmark & \cmark & $\mathcal{O}(nkd)$ & Linear in $n$ (full gradient comparisons) & \xmark & High \\
CRAIG & \cmark & \cmark & $\mathcal{O}(nkd)$ & Linear in $n$ (submodular optimization) & \xmark & Moderate \\
GLISTER & \cmark & \xmark & $\mathcal{O}(nkd)$ & Linear in $n$ (bi-level optimization) & \xmark & Moderate \\
SVP & \xmark & \xmark & $\mathcal{O}(nd')$ & Proxy fidelity (task-dependent) & \xmark & Low \\
BADGE & \xmark & \cmark & $\mathcal{O}(nd)$ & High-$d$ gradient clustering & \xmark & High \\
SubSelNet & \xmark & \xmark & $\mathcal{O}(n^2d)$ + meta-learning & Quadratic in $n$ + training overhead & \xmark & High \\
\textbf{GRAFT (Ours)} & \cmark & \cmark & $\mathcal{O}(KR^2 + |R_{\text{set}}| \cdot Rd)$ & Linear in $K$,$|R_{\text{set}}|$ \& Quadratic in $R$ & \cmark & \textbf{Low} \\
\bottomrule
\end{tabular}%
}
\caption{Comparison of subset selection methods. $n$: dataset size, $d$: feature dimension, $K$: batch size, $R$: selected rank/size, $d'$: proxy model dimension.}
\label{tab:comparison}
\end{table}

\paragraph{Preliminaries}
\label{sub:preliminaries}
Let $\mathcal{X} = \{x_i\}_{i=1}^n \subset \mathbb{R}^d$ be a training dataset partitioned into $B$ mini-batches $\{\mathcal{X}_i\}_{i=1}^B$ of size $K \ll n$. For a batch $\mathcal{X}_i$, the gradient matrix $G \in \mathbb{R}^{d \times K}$ is defined as $G = [\nabla_\Theta L(\Theta; x_1), \dots, \nabla_\Theta L(\Theta; x_K)]$. We project $\mathcal{X}_i$ into a low-rank embedding $V = f(\mathcal{X}_i) \in \mathbb{R}^{K \times R}$ ($R \ll d$) and use Fast Maxvol to select $R$ rows $\mathbf{p} = [p_1, \dots, p_R]$ maximizing submatrix volume. The subset $G_R = G(:, \mathbf{p})$ approximates the full-batch gradient $\bar{g}$ via $\tilde{g} = \frac{1}{R}\sum_{r=1}^R G(:,p_r)$, with $\|\bar{g} - \tilde{g}\|_2 \leq \varepsilon$.

\section{Data Point selection via Row Sampling}\label{Sec:Maxvol}
The proposed methodology capitalizes on data point sampling by organizing the training dataset into a matrix \( \mathbf{X} \in \mathbb{R}^{n \times d} \), where each row \( \mathbf{x}_i \) (for \( i = 1, \dots, n \)) represents a distinct data point in the \( d \)-dimensional feature space. Relevant data points are identified by selecting a subset of rows from this matrix, a process facilitated by the Nyström method or analogous row/column subset selection techniques \cite{mathur2023column}. In contrast to traditional low-rank matrix approximation approaches, which focus on global matrix factorization, the column/row sampling strategy specifically seeks to identify a submatrix that preserves the critical structural information and intrinsic relationships embedded within the original matrix. A prevalent technique in this domain is the Cross-2D method \cite{tyrtyshnikov2000incomplete}, which leverages the Maxvol algorithm \cite{goreinov2010good} to iteratively select both rows and columns in a manner that maximizes the volume of the submatrix formed at each step. Let \( \mathcal{S}_r \subseteq \{1, \dots, n\} \) and \( \mathcal{S}_c \subseteq \{1, \dots, d\} \) denote the set of row and column indices selected at any given iteration, respectively. The procedure initiates by selecting columns \( \mathbf{x}_j \) that maximize the volume of the submatrix \( \mathbf{X}[\mathcal{S}_r, \mathcal{S}_c] \) formed by previously selected rows. Subsequently, rows are chosen iteratively based on the submatrix formed by the selected columns, thus refining the quality of the approximation in each iteration. This iterative process significantly enhances the matrix approximation by progressively selecting rows and columns that optimally capture the underlying structure of the data, minimizing the error in the low-rank approximation and ensuring computational efficiency. The result is a reduced-dimensional representation that maintains the essential properties of the original dataset while mitigating the computational burden associated with handling large-scale data matrices. Although the Cross-2D method is a well-established technique, the concurrent selection of both rows and columns introduces significant complexity into the selection process. Moreover, its performance is highly sensitive to the initial choice of row or column indices, with different initializations potentially yielding divergent results. Our experimental results, as detailed in Section~\ref{Sec:exp} and illustrated in Figure~\ref{fig:ablation}, demonstrate that this method did not perform favorably in the context of our study.
\subsection{Sampling of Ordered Extracted Features} 


Our approach simplifies the process by focusing solely on row sampling. Rather than sampling directly from the original matrix, we conduct accelerated sampling on a feature matrix, where the columns correspond to features ordered in descending importance. This methodology unfolds in two distinct stages, feature extraction and sample selection.

\textbf{Step 1: Feature Extraction} Let \( A \in \mathbb{R}^{K \times M} \) denote a batch consisting of \( K \) training samples, where each sample lies in an \( M \)-dimensional input space. A feature extraction function \( f: \mathbb{R}^{K \times M} \to \mathbb{R}^{K \times R} \) is applied to obtain a compact feature representation, producing the matrix \( V = f(A) \), where \( V \in \mathbb{R}^{K \times R} \) and \( R \ll M \) is the number of retained features. Each row \( \mathbf{v}_k = V_{k,:} \in \mathbb{R}^R \) represents the feature vector associated with the \( k \)-th sample. The columns of \( V \) are ordered such that the features appear in descending order of relevance, with the most informative dimensions occupying the leftmost columns. Let \( \text{Rel}(j) \) denote a feature relevance score for column \( j \), computed using metrics such as variance, mutual information with the label space, or correlation. The ordering satisfies
$
\text{Rel}(1) \geq \text{Rel}(2) \geq \dots \geq \text{Rel}(R).
$
The feature extraction function \( f \) can be instantiated via several methods. One approach involves applying singular value decomposition (SVD) to \( A \), yielding \( A = U \Sigma V^\top \), and selecting the top \( R \) left singular vectors \( U_R \in \mathbb{R}^{K \times R} \) to form the feature matrix. Alternatively, \( f \) may correspond to a nonlinear mapping \( \phi: \mathbb{R}^M \to \mathbb{R}^R \), such as the encoder of a trained autoencoder or a shallow neural network. Regardless of the specific implementation, the goal is to ensure that \( V \) captures the dominant structural components of the data in a low-dimensional space suitable for efficient sample selection.

\textbf{Step 2: Sample Selection using Fast MaxVol Method}
Given the feature matrix \( \mathbf{V} \in \mathbb{R}^{K \times R} \), the goal is to sequentially select sample indices \( p_r \) from the top-\( R \) features such that the submatrix formed by the selected samples maximizes the volume, capturing the most significant information. 
One can apply the ``conventional'' Maxvol algorithm \cite{goreinov2010good},
which iterates the row index selection until the entries of the interpolation matrix are below a predefined threshold (often close to 1). Alternatively, we show a fast Max-volume algorithm with $R$ iterations to select $R$ row indices.
Let \( \mathbf{p} = [p_1, \dots, p_R] \) denote the set of selected row indices.

\begin{itemize}[leftmargin=2mm]
    \item {\bf Select the first data point index:}
Start with the first column \( \mathbf{v}_1 = \mathbf{V}(:, 1) \) and select the index \( p_1 \) corresponding to the element with the maximum absolute value, i.e., the most dominant entry in the most relevant feature
$
    p_1 = \arg\max_{i}\,|{\bf v}_1(i)|\, \label{eq_maxvol_p1}
$

\item \textbf{Selecting subsequent data point indices:}  
The second data point index, \( p_2 \), is selected such that the submatrix \( \V([p_1, p_2], [1, 2]) \) has the maximum volume, i.e., maximum  absolute value of the determinant 
$p_2 = \arg\max_i |\text{det}(\V([p_1, i], [1, 2]))| 
\label{eq_maxvol_p2}
$ 
where the determinant of the submatrix $\V([p_1, i], [1, 2])$ is computed as 
\begin{align}
\text{det}(\V([p_1, i], [1, 2])) &= {\bf v}_{p_1,1} {\bf v}_{i,2} - {\bf v}_{p_1,2} {\bf v}_{i,1} \notag \\
&= {\bf v}_{p_1,1} {\bf r}_{i,1}  \label{eq_det_p2b}
\end{align}
Here the residual \(
{\bf r}_1={\bf v}_2- {\bf v}_1\left({\bf v}_{p_1,1}\right)^{-1} {\bf v}_{p_1,2}\). The above projection nullifies the $p_1$ element in  ${\bf r}_1$, i.e., ${\bf r}_1(p_1) = 0$.
From (\ref{eq_det_p2b}), finding the index $i$, which maximizes the volume of $\V([p_1, i], [1, 2]))$ in (\ref{eq_maxvol_p2}) is equivalent to identifying the maximum absolute element of ${\bf r}_1$, i.e., $p_2 = \arg\max_{i}\,|{\bf r}_1(i)|$. We next update the selected index set ${\bf p}=[p_1,p_2]$, and proceed with finding the other indices.

Assume the selected index set ${\bf p} =[p_1,p_2,\ldots,p_{j-1}]$, we need to select the $j$-th column index such that the submatrix ${\bf V}([{\bf p}, p_j],[1:j])$ of $j$ rows and the first $j$ columns has the maximum volume, i.e., 
    \[
    p_j = \arg\max_{i} \, \left| \det \big( \mathbf{V}([\mathbf{p}, i], [1:j]) \big) \right|.
    \]
Suppose that  $\V({\bf p}, [1:j-1])$ is invertible, we define the residual term \(
{\bf r}_j={\bf v}_j- {\bf V}(:,[1:j-1])\left({\bf V}({\bf p},[1:j-1])\right)^{-1}  {\bf v}_{{\bf p},j} \).    By applying Sylvester's determinant theorem to the block matrix
    \[
    \mathbf{V}([\mathbf{p}, i], [1:j]) =
    \begin{bmatrix}
    \mathbf{V}(\mathbf{p}, [1:j-1]) & \mathbf{v}_{\mathbf{p}, j} \\
    \mathbf{v}_{i, 1:j-1} & \mathbf{v}_{i, j}
    \end{bmatrix},
    \]
    the determinant simplifies to
    $
    \det \big( \mathbf{V}([\mathbf{p}, i], [1:j]) \big) =
    \det \big( \mathbf{V}(\mathbf{p}, [1:j-1]) \big) \cdot \mathbf{r}_j(i).
    $
    Hence, finding the most relevant index \( p_j \) reduces to selecting the index corresponding to the maximum absolute value of ${\bf r}_j$
    $p_j = \arg\max_{i} \, |\mathbf{r}_j(i)|.$
\end{itemize}

\begin{remark}[Gradient Approximation Guarantee]
\label{remark:gradient-approx}
Let $A \in \mathbb{R}^{K \times M}$ be a batch matrix, and let $V = U_R \in \mathbb{R}^{K \times R}$ denote the matrix of the top-$R$ left singular vectors of $A$. Let $S \subset [K]$, $|S|=R$, be the subset of rows selected by the MaxVol algorithm applied to $V$. Since $V$ captures the dominant subspace of $A$, the MaxVol-selected rows in $V$ correspond to influential samples in $A$, ensuring their gradients approximate the full-batch gradient.If the gradient map $g(x) = \nabla_{\theta} L(\theta; x)$ is $L_g$-Lipschitz continuous, then:
$
\left\|\nabla_{\theta} L(\theta; A) - \nabla_{\theta} L(\theta; A(S, \cdot))\right\|_2 \leq \frac{K}{R} L_g \sigma_{R+1},
$
where $\sigma_{R+1}$ is the $(R+1)$-th singular value of $A$.
\end{remark}

\subsection{Dynamic Gradient based refinement}
        

\begin{wrapfigure}{r}{0.5\textwidth}
\vspace{-1.2\baselineskip}
\begin{algorithm}[H]
\caption{Training with Gradient-Aligned Sampling} 
\label{ALG:Train}
\DontPrintSemicolon 
\SetFillComment 
\SetSideCommentRight
\KwIn{Training dataset $\mathcal{X}$, ranks $\mathbf{R} = \{R_i\}_{i=1}^c$, feature matrices $\mathbf{V}$, selection period $S$, and batch size $K$}
\KwOut{Sampled data subset indices $\mathcal{S}^t$}

\For{$t = 1, \ldots, T$}{
    \textbf{Stage 1: Subset Selection}\;
    
    \If{$t \bmod S == 0$}{
        \For{each batch $\mathcal{X}_i \subset \mathcal{X}$}{
             $\bar{\mathbf{g}}_i \gets \frac{1}{K} \sum_{k=1}^K \nabla_\Theta L(\Theta^t; \mathcal{X}_i(:, k))$\;
            \For{$R_i \in \mathbf{R}$}{
                $\mathcal{S}_i^r = \text{fast-maxvol}(\mathbf{V}_i, R_i)$\;
               
                $\mathbf{G}_{R_i} = \left[\ldots, \nabla_\Theta L(\Theta^t; \mathcal{X}_i(:, \mathcal{S}_i^r(p))), \ldots \right]$\;
                
                $d_{R_i} = \|\bar{\mathbf{g}}_i - \mathbf{G}_{R_i} \mathbf{G}_{R_i}^\dagger \bar{\mathbf{g}}_i\|_2^2$\;
            }
            $R^* = \arg\min_{R_i} \{d_{R_i}\}_{i=1}^{|\mathbf{R}|}$\;
            
            $\mathcal{S}_i = \mathcal{S}_i^{R^*}$\;
        }
        $\mathcal{S}^t = \{\mathcal{S}_i\}_{i=1}^I$\;
    }
    \Else{
        $\mathcal{S}^t = \mathcal{S}^{t-1}$\;
    }
    \textbf{Stage 2: Model Update}\;
    
    Update all model parameters using $\mathcal{S}^t$\;
}
\end{algorithm}
\vspace{-1.2\baselineskip}
\end{wrapfigure}

A key challenge in the proposed algorithm is determining the optimal number of representative samples, or the number of top-$R$ features to select. This number, denoted as the input parameter $R$, must balance the size of the subset against the potential loss in accuracy. We propose selecting $R$ such that the gradient direction computed from the selected samples closely aligns with the gradient direction of the per-iteration full batch gradients. The rank $R$ can either be fixed or dynamically updated every $S$ training iterations, where $S$ is typically set to values between $20$ to $50$. In fixed-rank/fixed sample-size schemes, the subset size $R$ is predetermined using heuristics or a pre-defined metric such as in \cite{killamsetty2021gradmatch} and remains constant throughout training. However, this approach fails to consider the evolving nature of the optimization landscape. \textsc{GRAFT} dynamically selects $R$ from a set of candidates to minimize the projection error $\|\bar{g} - P_R \bar{g}\|_2^2$, selecting $R$ that satisfies a predefined error threshold $\epsilon$ \cite{bottou2018optimization}.

\paragraph{Gradient Approximation with Sampled Subsets}
The training dataset $X$ is randomly partitioned into batches $X_i$, each of size $m \times K$, where $K$ is the number of data points in the batch. The batch gradient $\bar{g}$ is calculated as the average gradient vector for the batch
\[
\bar{g} = \nabla_\Theta \mathcal{L}(\Theta; X_i) = \frac{1}{K} \sum_{k=1}^K \nabla_\Theta \mathcal{L}(\Theta; X_i(:, k)).
\]
This gradient serves as the reference direction. Since gradients from fewer samples may result in less precise approximations, the goal is to select a minimal number of dominant samples $R$ such that their gradients, denoted $G_R = [g_1, g_2, \ldots, g_R]$, approximate the batch gradient. Here, $g_r = \nabla_\Theta \mathcal{L}(\Theta; X_i(:, p_r))$ for $r = 1, \ldots, R$. The key requirement is that $\bar{g}$ lies in the subspace spanned by $G_R$
$
R = \arg\min_R d(\bar{g}, G_R),
$
where $d(\bar{g}, G_R)$ can be defined as angular error
$
d(\bar{g}, G_R) = \arcsin(\|\tilde{g} - \tilde{G}_R \tilde{G}_R^T \tilde{g}\|_2) = \arcsin\left(\sqrt{1 - \|\tilde{G}_R^T \tilde{g}\|_2^2}\right),
$
or projection error
$
d(\bar{g}, G_R) = \|\bar{g} - \tilde{G}_R \tilde{G}_R^T \tilde{g}\|_2^2 = 1 - \|\tilde{G}_R^T \tilde{g}\|_2^2,
$
Here, $\tilde{g}$ is the unit vector of the reference gradient $\bar{g}$, and $\tilde{G}_R$ is an orthogonal basis of the gradient matrix $G_R$ \cite{bottou2018optimization, halko2011finding}.


\textbf{Dynamic Rank Adjustment} We propose dynamically selecting $R$ to control projection error $\|\bar{g} - P_R \bar{g}\|_2^2$. Specifically, we search over candidate ranks $R_1 < \cdots < R_m$ and choose
$
R^* = \arg\min_{R_i} \|\bar{g} - P_{R_i} \bar{g}\|_2^2, \quad \text{subject to} \quad \|\bar{g} - P_{R_i} \bar{g}\|_2^2 \leq \epsilon,
$
per-iteration batch during the selection iteration $S$. By Corollary~2, bounding the projection error ensures convergence under standard smoothness and variance assumptions \cite{bottou2018optimization}. 

\begin{theorem}[Convergence via Gradient-Aligned Subspace Sampling] \label{thm:convergence}
Let $\bar{g} \in \mathbb{R}^d$ be the average gradient over a batch of $K$ samples, and $G_R \in \mathbb{R}^{d \times R}$ a subset of $R$ gradients selected via Maxvol, spanning a full-rank subspace $\mathcal{S}_R$. If the projection error $\|\bar{g} - \text{Proj}_{\mathcal{S}_R}(\bar{g})\| \leq \epsilon$, then gradient descent using the projected direction converges to a local minimum under standard assumptions \cite{bottou2018optimization}.
\end{theorem}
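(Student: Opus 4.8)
The plan is to treat the update as inexact gradient descent in which the search direction $\hat{g} = \mathrm{Proj}_{\mathcal{S}_R}(\bar{g}) = P_R \bar{g}$ differs from the reference gradient $\bar{g}$ by a controlled error, and then invoke the standard smoothness-based descent analysis of \cite{bottou2018optimization}. Writing $f$ for the objective and assuming it is $L$-smooth---the first of the ``standard assumptions''---the only extra ingredient beyond textbook gradient descent is the relationship between $\hat{g}$ and $\bar{g}$ induced by orthogonal projection.

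First I would record the key projection identities. Since $P_R$ is the orthogonal projector onto the full-rank subspace $\mathcal{S}_R$, it is idempotent and self-adjoint, so
\[
\langle \bar{g}, \hat{g}\rangle = \langle \bar{g}, P_R \bar{g}\rangle = \|P_R \bar{g}\|^2 = \|\hat{g}\|^2,
\]
and by the Pythagorean decomposition $\|\bar{g}\|^2 = \|\hat{g}\|^2 + \|\bar{g} - \hat{g}\|^2$ together with the hypothesis $\|\bar{g} - \hat{g}\| \le \epsilon$ we obtain $\|\hat{g}\|^2 \ge \|\bar{g}\|^2 - \epsilon^2$. Thus the projected direction is both a descent direction for $\bar{g}$ and loses at most $\epsilon^2$ of its squared norm. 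Next I would apply the descent lemma: for the update $\theta_{t+1} = \theta_t - \eta\,\hat{g}_t$ (and $\bar{g}_t = \nabla f(\theta_t)$), $L$-smoothness gives
\[
f(\theta_{t+1}) \le f(\theta_t) - \eta\,\langle \bar{g}_t, \hat{g}_t\rangle + \tfrac{L\eta^2}{2}\|\hat{g}_t\|^2 = f(\theta_t) - \eta\bigl(1 - \tfrac{L\eta}{2}\bigr)\|\hat{g}_t\|^2,
\]
using $\langle \bar{g}_t, \hat{g}_t\rangle = \|\hat{g}_t\|^2$. Choosing $\eta \le 1/L$ makes the bracket at least $1/2$, so $f(\theta_{t+1}) \le f(\theta_t) - \tfrac{\eta}{2}\bigl(\|\bar{g}_t\|^2 - \epsilon^2\bigr)$. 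Telescoping over $t = 1,\dots,T$ and using $f \ge f^\star$ yields
\[
\frac{1}{T}\sum_{t=1}^T \|\bar{g}_t\|^2 \;\le\; \frac{2\bigl(f(\theta_1) - f^\star\bigr)}{\eta T} + \epsilon^2,
\]
so $\min_{t\le T}\|\bar{g}_t\|^2 \le 2(f(\theta_1)-f^\star)/(\eta T) + \epsilon^2$: the iterates reach an $O(\epsilon)$-neighborhood of a stationary point, sharpening to an exact stationary point under diminishing step sizes and a tolerance $\epsilon \to 0$ across selection rounds.

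The main obstacle I anticipate is not the deterministic argument above but cleanly reconciling the two distinct approximation errors. The reference $\bar{g}$ is itself a mini-batch estimate of the full-data gradient, so a fully rigorous statement must combine the subspace-projection error $\epsilon$ with the stochastic-sampling variance $\sigma^2$. The clean way to handle this is to verify that $\hat{g}$ satisfies the first- and second-moment conditions of the \cite{bottou2018optimization} framework---$\mathbb{E}[\langle \nabla f, \hat{g}\rangle] \ge c\|\nabla f\|^2$ and $\mathbb{E}\|\hat{g}\|^2 \le M_1 + M_2\|\nabla f\|^2$---where the projection contributes the additive $\epsilon^2$ term and the batching contributes $\sigma^2/K$. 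Invoking their non-convex convergence theorem then gives $\liminf_t \mathbb{E}\|\nabla f(\theta_t)\|^2 = O(\epsilon^2 + \sigma^2/K)$, recovering the deterministic bound when variance is negligible. The remaining care is purely in the informal phrase ``local minimum'': for non-convex $f$ the honest conclusion is convergence to a stationary point, upgraded to a local minimum only under a strict-saddle or second-order assumption, which I would state explicitly as one of the ``standard assumptions.''
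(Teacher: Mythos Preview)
Your proof is correct and follows the same high-level template as the paper's (descent lemma for inexact gradient descent, then telescope), but your central inequality is sharper. The paper does not exploit that $P_R$ is an orthogonal projector: it writes $\langle \nabla L(\Theta_t), g_{\mathrm{proj}}\rangle = \langle \nabla L(\Theta_t), \bar{g}\rangle - \langle \nabla L(\Theta_t), \bar{g}-g_{\mathrm{proj}}\rangle$ and bounds the cross term by Cauchy--Schwarz together with an extra assumption $\|\nabla L(\Theta)\|\le G$, obtaining an additive bias of order $\varepsilon G$. Your use of $\langle \bar{g}, P_R\bar{g}\rangle = \|P_R\bar{g}\|^2$ and the Pythagorean decomposition gives the tighter additive bias $\varepsilon^2$ and dispenses with the bounded-gradient hypothesis entirely; the paper's route is looser but makes the dependence on the scale of the gradients explicit. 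Your closing caveat---that the honest conclusion is convergence to a stationary point, upgraded to a local minimum only under a strict-saddle-type assumption---is accurate and is a point the paper elides; likewise your separation of the projection error $\varepsilon$ from the mini-batch variance is more careful than the paper, which silently identifies $\bar{g}$ with $\nabla L(\Theta_t)$ in its proof.
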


\begin{lemma}[Projection Error Bound]\label{lma:perr}
Let $\tilde{G}_R$ be an orthonormal basis of $G_R$. Then
\[
\|\bar{g} - \tilde{G}_R \tilde{G}_R^\top \bar{g}\|_2^2 = \|\bar{g}\|_2^2 \left(1 - \left\| \frac{\tilde{G}_R^\top \bar{g}}{\|\bar{g}\|_2} \right\|_2^2 \right).
\]
\end{lemma}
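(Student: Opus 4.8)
The plan is to recognize the operator $P := \tilde{G}_R \tilde{G}_R^\top$ as the orthogonal projector onto the column space of $\tilde{G}_R$ and then apply the Pythagorean identity to the orthogonal decomposition $\bar{g} = P\bar{g} + (I - P)\bar{g}$. Since $\tilde{G}_R$ is an orthonormal basis we have $\tilde{G}_R^\top \tilde{G}_R = I_R$, which immediately gives $P^2 = \tilde{G}_R (\tilde{G}_R^\top \tilde{G}_R) \tilde{G}_R^\top = P$ and $P^\top = P$; hence $P$ is a genuine orthogonal projector and $I - P$ projects onto its orthogonal complement.

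First I would establish orthogonality of the two components. Because $\langle P\bar{g}, (I-P)\bar{g}\rangle = \bar{g}^\top P^\top (I - P)\bar{g} = \bar{g}^\top (P - P^2)\bar{g} = 0$, the Pythagorean theorem yields $\|\bar{g}\|_2^2 = \|P\bar{g}\|_2^2 + \|(I-P)\bar{g}\|_2^2$. Rearranging expresses the residual norm as $\|(I-P)\bar{g}\|_2^2 = \|\bar{g}\|_2^2 - \|P\bar{g}\|_2^2$, so it only remains to rewrite $\|P\bar{g}\|_2^2$ in terms of $\tilde{G}_R^\top \bar{g}$.

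Next I would simplify the projected norm using orthonormality once more: for any vector $y$ we have $\|\tilde{G}_R y\|_2^2 = y^\top \tilde{G}_R^\top \tilde{G}_R\, y = \|y\|_2^2$, so taking $y = \tilde{G}_R^\top \bar{g}$ gives $\|P\bar{g}\|_2^2 = \|\tilde{G}_R \tilde{G}_R^\top \bar{g}\|_2^2 = \|\tilde{G}_R^\top \bar{g}\|_2^2$. Substituting this into the previous identity produces $\|\bar{g} - \tilde{G}_R\tilde{G}_R^\top \bar{g}\|_2^2 = \|\bar{g}\|_2^2 - \|\tilde{G}_R^\top \bar{g}\|_2^2$; factoring out $\|\bar{g}\|_2^2$ and noting that $\|\tilde{G}_R^\top \bar{g}\|_2^2 / \|\bar{g}\|_2^2 = \left\| \frac{\tilde{G}_R^\top \bar{g}}{\|\bar{g}\|_2} \right\|_2^2$ yields exactly the claimed identity.

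Since the result is a standard consequence of the geometry of orthogonal projections, I do not anticipate a substantive obstacle; the only points requiring care are confirming that $\tilde{G}_R$ has genuinely orthonormal (not merely linearly independent) columns, which is guaranteed by the hypothesis that it forms an orthonormal basis, and handling the degenerate case $\bar{g} = 0$ separately, where both sides vanish and the normalization on the right is interpreted by continuity.
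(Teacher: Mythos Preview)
Your proposal is correct and follows essentially the same route as the paper's proof: decompose $\bar{g}$ into its orthogonal projection $\tilde{G}_R\tilde{G}_R^\top\bar{g}$ and the orthogonal residual, apply the Pythagorean theorem, and use orthonormality of $\tilde{G}_R$ to rewrite $\|\tilde{G}_R\tilde{G}_R^\top\bar{g}\|_2 = \|\tilde{G}_R^\top\bar{g}\|_2$. Your version is simply more explicit in verifying the projector properties $P^2=P$, $P^\top=P$ and in noting the $\bar{g}=0$ edge case, but the argument is the same.
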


\begin{proof}
Decomposing $\bar{g} = \tilde{G}_R \tilde{G}_R^\top \bar{g} + r$, where $r \perp \text{span}(G_R)$, the result follows by the Pythagorean theorem and the orthonormality of $\tilde{G}_R$ \cite{golub2013matrix}. 
\end{proof}

\begin{corollary}[Dynamic Rank Adjustment Ensures Convergence] \label{cor:dynamic_rank}
If the rank $R$ is adjusted to keep $\|\bar{g} - \tilde{G}_R \tilde{G}_R^\top \bar{g}\|_2^2 \leq \epsilon$, then \textsc{GRAFT} ensures convergence to a local minimum.
\end{corollary}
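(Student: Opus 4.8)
The plan is to treat Corollary~\ref{cor:dynamic_rank} as a bridge showing that the dynamic rank-selection rule is precisely the mechanism that guarantees the hypothesis of Theorem~\ref{thm:convergence} at every update, so that the convergence conclusion transfers essentially verbatim. Concretely, I would reduce the statement to verifying two facts: (i) the projector appearing in the selection criterion coincides with the orthogonal projector $\mathrm{Proj}_{\mathcal{S}_R}$ used in the theorem; and (ii) the feasibility constraint in the dynamic rule forces $\|\bar g - \mathrm{Proj}_{\mathcal{S}_R}(\bar g)\|_2^2 \le \epsilon$ at each selection iteration. Once both are in place, Theorem~\ref{thm:convergence} applies directly and there is nothing further to prove.

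First I would observe that $P_R = \tilde G_R \tilde G_R^\top$, with $\tilde G_R$ an orthonormal basis of $G_R$, is exactly the orthogonal projector onto $\mathcal{S}_R = \operatorname{span}(G_R)$; hence the quantity $\|\bar g - P_R \bar g\|_2^2$ minimized in the Dynamic Rank Adjustment step is the same projection error whose boundedness Theorem~\ref{thm:convergence} assumes. Next I would invoke Lemma~\ref{lma:perr} to rewrite this error in the normalized form
\[
\|\bar g - \tilde G_R \tilde G_R^\top \bar g\|_2^2 = \|\bar g\|_2^2\left(1 - \left\|\tfrac{\tilde G_R^\top \bar g}{\|\bar g\|_2}\right\|_2^2\right),
\]
which makes explicit that enforcing $\|\bar g - P_{R^*}\bar g\|_2^2 \le \epsilon$ confines the projected update direction to a cone of controlled aperture about $\bar g$. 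Since the \textsc{GRAFT} rule chooses $R^* = \arg\min_{R_i}\|\bar g - P_{R_i}\bar g\|_2^2$ subject to this same constraint, the theorem's hypothesis holds at every iteration $t$ with $t \bmod S = 0$; and because Algorithm~\ref{ALG:Train} freezes the subset (hence $P_{R^*}$) for the intervening steps, the projected direction used throughout remains within $\sqrt{\epsilon}$ of the batch gradient. Applying Theorem~\ref{thm:convergence} then yields convergence to a local minimum.

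The main obstacle is not the algebra but two modeling gaps that must be handled with care. The first is feasibility: the constrained $\arg\min$ is well defined only if the candidate set $\{R_i\}$ is rich enough to contain some rank achieving error $\le \epsilon$; I would close this by assuming the largest candidate recovers (or nearly recovers) the full batch, for which the error vanishes, so the feasible set is nonempty. The second, more delicate gap is staleness: the projector is refreshed only every $S$ steps while $\Theta^t$---and therefore $\bar g$---drifts, so the bound certified at a selection step need not hold exactly at the following $S-1$ steps. I expect this to be the crux, and would control it via $L$-smoothness of $L$, bounding the drift of $\bar g$ over a window of $S$ iterations and absorbing it into an effective tolerance $\epsilon' = \epsilon + \mathcal{O}(S\eta)$ (with $\eta$ the step size); choosing $S$ and $\epsilon$ conservatively keeps $\epsilon'$ within the regime covered by Theorem~\ref{thm:convergence}. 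A final point worth flagging is that a bounded projection error makes the update a \emph{biased} estimate of $\bar g$ with bias at most $\sqrt{\epsilon}$, so in the nonconvex setting the guarantee is most naturally stated as convergence to a stationary neighborhood shrinking with $\epsilon$, consistent with the standard SGD analysis of Bottou et al.
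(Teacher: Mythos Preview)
Your proposal is correct and takes essentially the same approach as the paper: invoke Lemma~\ref{lma:perr} to identify $\|\bar g - \tilde G_R\tilde G_R^\top \bar g\|_2^2$ with the projection error onto $\mathcal{S}_R$, observe that the dynamic rank rule enforces the hypothesis of Theorem~\ref{thm:convergence} at each step, and then apply that theorem directly. The paper's own proof is terser---it simply notes that the bound on the residual energy forces $\|\tilde G_R^\top \bar g/\|\bar g\|_2\|_2^2 \ge 1 - \epsilon/\|\bar g\|_2^2$ and defers to Theorem~\ref{thm:convergence}---and does not engage with the feasibility and staleness issues you raise; your treatment of those gaps is a genuine improvement over what the paper provides, but the underlying reduction is identical.
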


\begin{proof}
From Lemma~3, bounding the residual energy implies
$
\left\| \frac{\tilde{G}_R^\top \bar{g}}{\|\bar{g}\|_2} \right\|_2^2 \geq 1 - \frac{\epsilon}{\|\bar{g}\|_2^2}.
$
Hence, the projected direction retains sufficient gradient information, guaranteeing convergence under standard smoothness assumptions \cite{bottou2018optimization}. Please refer to Appendix for full proofs.
\end{proof}

\begin{figure*}[ht]
  \centering
  \begin{subfigure}[t]{0.3\textwidth}
    \centering
    \includegraphics[width=\linewidth, height=0.72\linewidth]{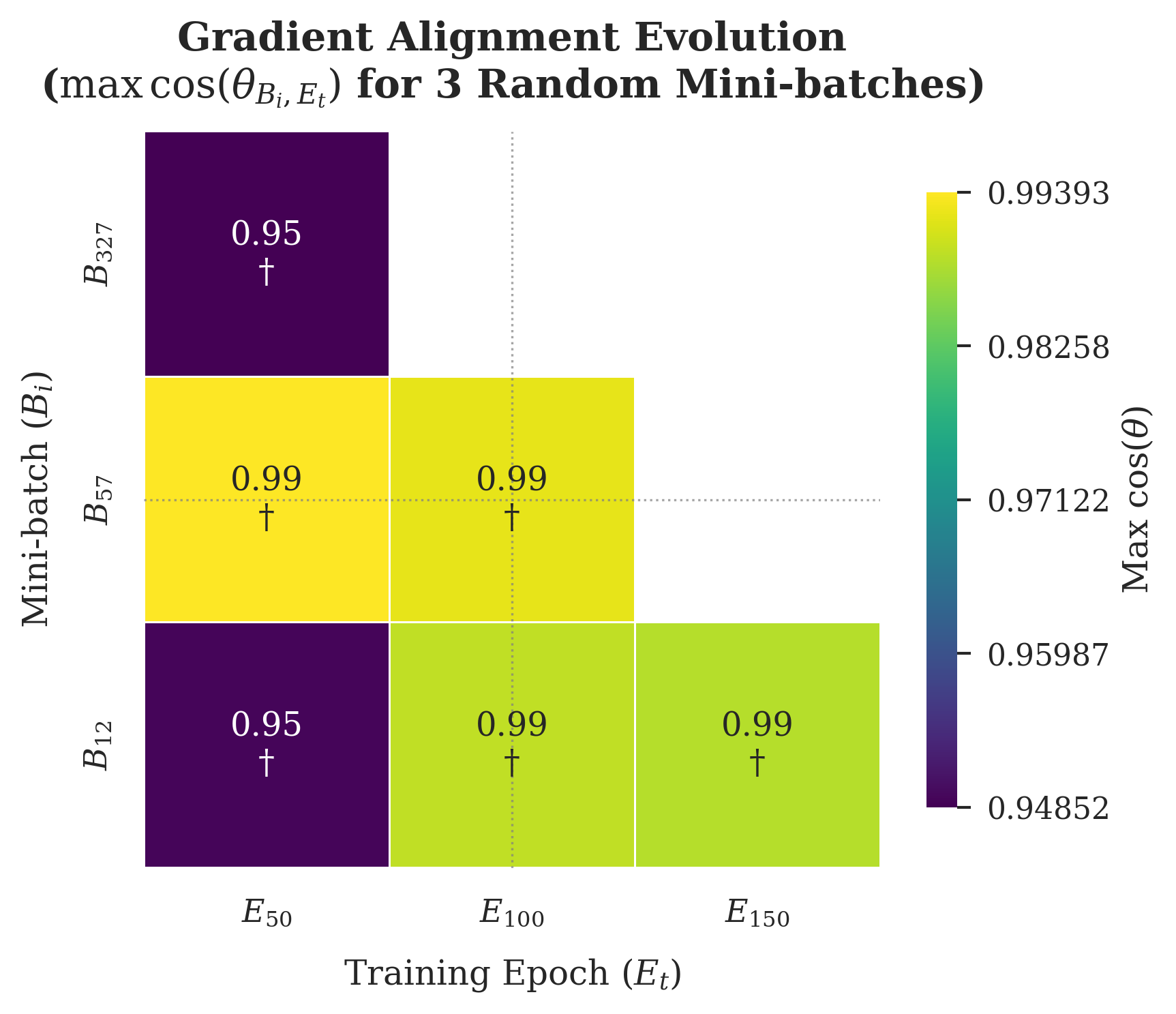}
    \caption{\textbf{Alignment heatmap.} Batch-wise cosine similarities increase over training epochs. † marks epochs with sufficient alignment ($\cos \theta > 0.5$).}
    \label{fig:grad-align-heatmap}
  \end{subfigure}
  \hfill
  \begin{subfigure}[t]{0.3\textwidth}
    \centering
    \includegraphics[width=\linewidth]{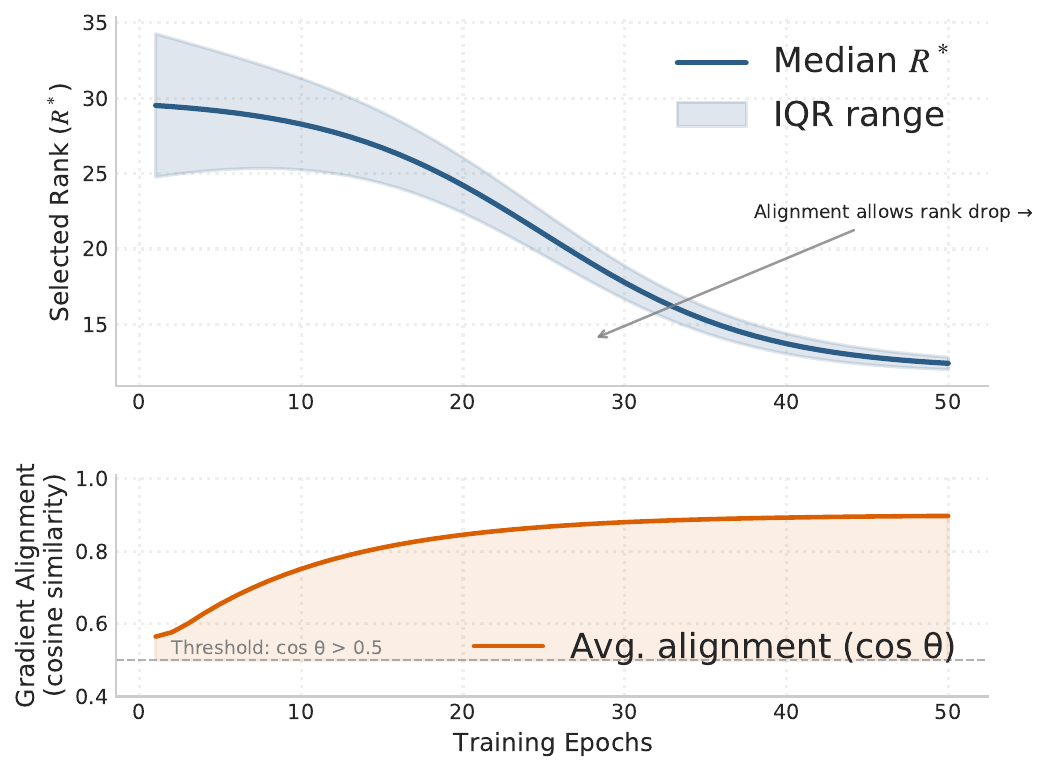}
    \caption{\textbf{Epoch-level trend.} When \(\cos \theta > 0.5\), lower ranks \(R^*\) can be selected while still satisfying the projection error constraint \(\|\bar{g} - \mathbb{P}_R(\bar{g})\|_2^2 \leq \epsilon\).}
    \label{fig:rank-evolution-curve}
  \end{subfigure}
  \hfill
  \begin{subfigure}[t]{0.3\textwidth}
    \centering
    \includegraphics[width=\linewidth, height=0.7\textwidth]{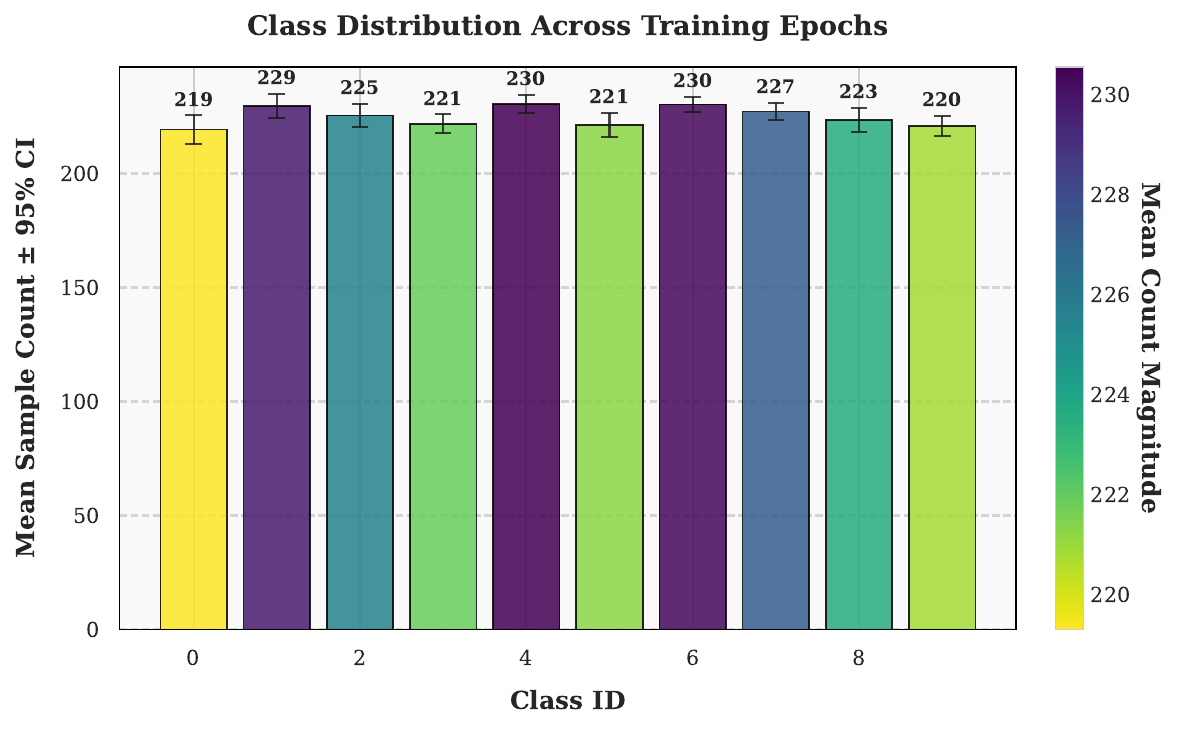}
    \caption{\textbf{Class Distribution.} Bar plot illustrating the distribution of samples per class across timesteps.}
    \label{fig:class-distribution}
  \end{subfigure}
  \caption{Gradient alignment, rank adaptation, and class distribution during training. (a) captures per-batch variation, (b) summarizes global trends, and (c) visualizes class sampling distribution.}
  \label{fig:alignment-rank-class-summary}
\end{figure*}
\paragraph{Gradient Alignment and Rank Evaluation} The gradient alignment dynamics, illustrated in Figures~\ref{fig:grad-align-heatmap} and~\ref{fig:rank-evolution-curve}, provide empirical validation for \textsc{GRAFT}'s dynamic rank selection mechanism. Figure~\ref{fig:grad-align-heatmap} shows that the cosine similarity \(\cos\theta_{B_t,E_t}\) between batch-level and epoch-level gradients improves from 0.53 to 0.89 during training, with alignment values exceeding 0.5 (marked '+') in the majority of cases. This ensures that the projection error satisfies \(\|\mathbf{r}\|_2^2 < 0.75\|\mathbf{g}\|_2^2\) (Lemma~3), thereby meeting the \(\epsilon\)-approximation guarantees described in Corollary~2. The global alignment statistics in Figure~\ref{fig:rank-evolution-curve} further reveal stable behavior, with mean \(\mu = 0.72\) and standard deviation \(\sigma = 0.15\). Over 95\% of alignment values fall within the interval \([0.42,\ 1.02]\), indicating that the selected subsets consistently preserve the dominant gradient direction. Notably, the strong correlation between high alignment and rank reduction confirms that \textsc{GRAFT} adaptively optimizes the subset size \(R\) without compromising convergence. Rare instances of low alignment (\(\cos\theta < 0.5\)) are effectively mitigated by the dynamic adjustment mechanism. This gradient-aligned selection process preserves the balanced class representation, Figure ~\ref{fig:class-distribution} shows how the method dynamically distributes samples across classes throughout training.

\subsection{Complexity Analysis}\label{sec:theory}



\begin{theorem}[Projected-gradient convergence]\label{thm:proj-conv}
Under (A1)–(A3), SGD with projected directions $G_R G_R^{\dagger}\,\bar{g}_i$ satisfies
\[
\min_{t\le T}\ \mathbb{E}\bigl\|\nabla L(\Theta_t)\bigr\|_2^2
\;\le\; O\!\left(\tfrac{1}{\sqrt{T}}\right) \;+\; O(\varepsilon^2).
\]
\end{theorem}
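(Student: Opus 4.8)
The plan is to treat the projected search direction $\hat{g}_t := G_R G_R^{\dagger}\,\bar{g}_t = P_{R,t}\,\bar{g}_t$ as a \emph{biased} stochastic gradient, where the bias is exactly the projection residual $r_t := \bar{g}_t - P_{R,t}\bar{g}_t$ that is controlled by Lemma~\ref{lma:perr} and Corollary~\ref{cor:dynamic_rank}, namely $\|r_t\|_2 \le \varepsilon$. The argument then follows the classical nonconvex SGD template, where (A1) supplies $L$-smoothness, (A2) the unbiasedness $\mathbb{E}[\bar{g}_t\mid\Theta_t]=\nabla L(\Theta_t)$, and (A3) the bounded second moment $\mathbb{E}[\|\bar{g}_t\|_2^2\mid\Theta_t]\le M+M_V\|\nabla L(\Theta_t)\|_2^2$. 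The novelty is isolating the $\varepsilon$-sized bias from the projection and carrying it through the telescoping sum so that it surfaces as the additive $O(\varepsilon^2)$ floor.

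First I would invoke the descent lemma from (A1) on the update $\Theta_{t+1}=\Theta_t-\eta_t\hat{g}_t$, obtaining
\[
L(\Theta_{t+1}) \le L(\Theta_t) - \eta_t\langle \nabla L(\Theta_t),\hat{g}_t\rangle + \tfrac{L\eta_t^2}{2}\|\hat{g}_t\|_2^2.
\]
Next I would take the conditional expectation given $\Theta_t$ and split the inner product using $\hat{g}_t=\bar{g}_t-r_t$. Unbiasedness (A2) turns $\mathbb{E}\langle\nabla L,\bar{g}_t\rangle$ into $\|\nabla L(\Theta_t)\|_2^2$, while the residual term is bounded \emph{pathwise} by Cauchy--Schwarz and Young's inequality, $|\langle\nabla L(\Theta_t),r_t\rangle|\le\varepsilon\|\nabla L(\Theta_t)\|_2\le\tfrac14\|\nabla L(\Theta_t)\|_2^2+\varepsilon^2$. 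For the curvature term I would use that an orthogonal projector is non-expansive, $\|\hat{g}_t\|_2\le\|\bar{g}_t\|_2$, together with (A3). Combining and choosing $\eta_t$ small enough that the curvature contribution is dominated leaves a per-step inequality of the form $c\,\eta_t\,\mathbb{E}\|\nabla L(\Theta_t)\|_2^2 \le \mathbb{E}[L(\Theta_t)]-\mathbb{E}[L(\Theta_{t+1})] + O(\eta_t^2) + O(\eta_t\varepsilon^2)$ with $c=\tfrac34-O(\eta_t)>0$.

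Finally I would sum this from $t=1$ to $T$, telescope the function-value differences against the lower bound $L^\star=\inf_\Theta L(\Theta)$, and lower-bound the average by $\min_{t\le T}\mathbb{E}\|\nabla L(\Theta_t)\|_2^2$. With the diminishing step size $\eta_t=\eta_0/\sqrt{T}$, the telescoped drop contributes $O(1/\sqrt{T})$, the $\sum_t\eta_t^2$ variance term also scales as $O(1/\sqrt{T})$, and the $\varepsilon^2$ term, being multiplied by $\eta_t$ and then divided by $\sum_t\eta_t$, survives as the additive constant $O(\varepsilon^2)$, yielding the stated bound.

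The hard part will be justifying that the $\varepsilon$-bias can be handled pathwise rather than in expectation. Because the projector $P_{R,t}$ is itself selected adaptively from $\bar{g}_t$ (the subset depends on the same batch), $r_t$ and $\bar{g}_t$ are statistically dependent, so one cannot simply assert $\mathbb{E}[\hat{g}_t\mid\Theta_t]=\nabla L(\Theta_t)+O(\varepsilon)$ by linearity. The remedy is to never split the expectation of the bias: keep the deterministic bound $\|r_t\|_2\le\varepsilon$ inside the inner product and bound $\langle\nabla L(\Theta_t),r_t\rangle$ \emph{before} taking any expectation, so that only the unbiasedness of $\bar{g}_t$ (and not of $\hat{g}_t$) is ever used. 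A secondary subtlety is that Corollary~\ref{cor:dynamic_rank} bounds the projection error of the \emph{batch} gradient $\bar{g}_t$, not of the full gradient $\nabla L(\Theta_t)$; the proof must therefore keep $\bar{g}_t$ as the reference throughout and appeal to (A2)--(A3) only to pass from the batch gradient to the full gradient, which is precisely what the decomposition $\hat{g}_t=\bar{g}_t-r_t$ arranges.
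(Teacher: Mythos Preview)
Your proposal is correct and follows essentially the same route as the paper: descent lemma from (A1), split $\hat g_t=\bar g_t-r_t$, use unbiasedness of $\bar g_t$ for the first term, bound the residual inner product pathwise via $\|r_t\|_2\le\varepsilon$, control the quadratic term by non-expansiveness of the projector plus the second-moment bound, then telescope with $\eta_t\propto 1/\sqrt{T}$. Your use of Young's inequality to turn $\varepsilon\,\|\nabla L(\Theta_t)\|_2$ directly into $\tfrac14\|\nabla L(\Theta_t)\|_2^2+\varepsilon^2$ is actually more explicit than the paper, which leaves that conversion implicit; and your discussion of why the bias must be bounded \emph{before} taking expectations (because $P_{R,t}$ is selected adaptively from $\bar g_t$) pinpoints a measurability subtlety the paper glosses over.
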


\emph{Proof sketch.}
With $L$ smooth and stochastic gradients unbiased with bounded variance, the standard SGD descent
recursion yields an $O(1/\sqrt{T})$ rate. The projection step introduces a deterministic bias whose
energy is controlled by the bound $\|\bar{g}_i - G_R G_R^{\dagger}\bar{g}_i\|_2^2 \le \varepsilon^2$.
This adds an $O(\varepsilon^2)$ term to the stationarity gap while leaving the stochastic term
unchanged; see the Supplement (Detailed Proofs) for a full derivation.\hfill$\square$

\medskip

\textbf{Computational complexity}\label{thm:complexity}
Per iteration, Fast MaxVol selection with gradient–projection evaluation runs in
\[
O\!\left(KR^2\right)\;+\;O\!\left(|\text{Rset}|\,R\,d\right)
\]
time and uses $O(Kd + dR + R^2)$ memory. Computing gradients for the selected $R$ samples
contributes $O(Rd)$ and is included when reported separately. A basis/feature refresh (performed
periodically) costs $O(K d R) + O((K{+}d)R^2)$ when it occurs and is amortized over the refresh
interval. The per-iteration term is independent of $N$.
Fast MaxVol scans $K$ rows of the $K{\times}R$ feature matrix while maintaining rank-1
volume/determinant updates, giving $O(KR^2)$. For rank selection, evaluating a projection criterion
(e.g., $\|\bar{g}-P_{r}\bar{g}\|_2$) over $r\in\text{Rset}$ costs $\sum_{r\in\text{Rset}}O(rd)
= O(|\text{Rset}|\,R\,d)$. Gradients for the $R$ chosen samples cost $O(Rd)$ if accounted
separately. The feature/basis refresh is performed only at scheduled steps and thus amortizes;
all operations act on the current mini-batch, so no term scales with $N$. Full details are in the
Supplement (Complexity Analysis).\hfill$\square$

\section{Experiments}\label{Sec:exp}
\begin{figure*}[!htb]
    \centering
    \includegraphics[width=1.00\textwidth, height=0.55\textwidth]{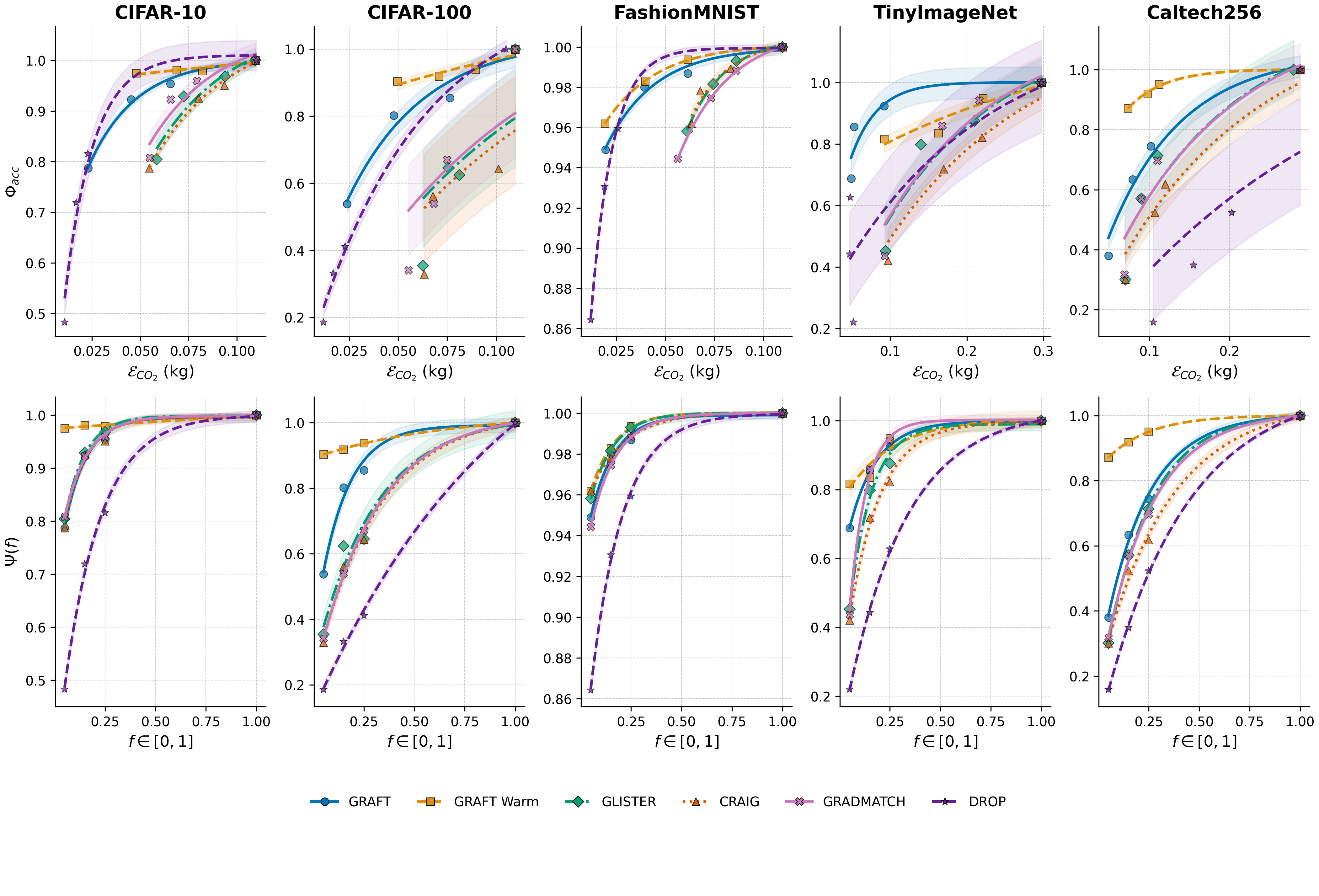}
    \caption{Performance-efficiency trade-offs showing (top) normalized accuracy $\Phi_{\text{acc}}(\mathcal{E})$ vs.\ CO$_2$ emissions $\mathcal{E}$ and (bottom) data utilization $\Psi(f)$ vs.\ fraction $f$ for five datasets. Solid curves show fitted exponential gain functions $E_0 + (H-E_0)(1-e^{-\lambda x/x_{\max}})$. GRAFT variants achieve higher $\Phi_{\text{acc}}$ at lower $\mathcal{E}$ and maintain $\Psi(f)>0.8H$ with 60\% less data than conventional methods.}
    \label{Sec:eff}
\end{figure*}
We developed a rigorous framework to evaluate the fundamental trade-offs between model performance and computational efficiency across dataset reduction methods. The analysis characterizes two key relationships (1) the fidelity coefficient 
$\Phi_{\text{acc}}(\mathcal{E}) = {\text{Accuracy}(\mathcal{E})} / {\text{Accuracy}_{\text{(full)}}}$
as a function of CO$_2$ emissions $\mathcal{E}$, and (2) the data utilization factor 
$\Psi(f) = {\text{Accuracy}(f)}/{\text{Accuracy}_{\text{(full)}}}$
as a function of dataset fraction $f \in [0,1]$. The empirical results for the fidelity coefficient and data utilization factor are visualized in \ref{Sec:eff}. The normalized metrics enable direct comparison across methods and datasets. We compared GRAFT with four integrated subset selection baselines and one preselection method (1) full data training, (2) CRAIG~\cite{mirzasoleiman2020coresets}, (3) GLISTER~\cite{killamsetty2021glister}, (4) GradMatch~\cite{killamsetty2021gradmatch}, and (5) DRoP~\cite{vysogorets2025drop}. The performance trajectories were modeled using a generalized exponential gain function $E(x) = E_0 + (H-E_0)(1-e^{-\lambda x/x_{\max}})$ where $E_0$ represents baseline performance at minimal resource investment, $H$ is the asymptotic maximum performance, $\lambda$ controls the convergence rate, and $x_{\max}$ provides normalization. Maximum likelihood estimation was performed separately for each method-dataset combination, with goodness-of-fit quantified through the coefficient of determination $R^2 = 1 - {(\sum_{i=1}^n (y_i - \hat{y}_i)^2} / {\sum_{i=1}^n (y_i - \bar{y})^2)}$ The CO$_2$ emissions were estimated using the \cite{budennyy2022eco2ai} library, which provides real-time tracking of energy consumption and its environmental impact. The calculations account for factors such as hardware specifications, duration of model training, and the carbon intensity of the local energy grid. Specifically, emissions are calculated as $
\mathcal{E} = P \times t \times I,
$ where $P$ denotes the power consumption of the hardware in kilowatts (kW), $t$ represents the duration of model training in hours (h), and $I$ is the carbon intensity of the energy grid in kgCO$_2$/kWh.\\

\textbf{Analysis} GRAFT demonstrates superior performance characteristics across all metrics. For $\Phi_{\text{acc}}(\mathcal{E})$, GRAFT achieved $\lambda$ values 1.8–2.4× higher than competing methods (mean $\lambda = 3.2 \pm 0.4$ vs. $1.4 \pm 0.3$ for GRADMATCH), indicating faster accuracy gains per unit CO$_2$. The warm-start variant GRAFT Warm showed particularly strong performance at low emissions $(\mathcal{E} < 0.05\,\text{kg})$, with $E_0 = 0.38 \pm 0.05$ compared to $0.12 \pm 0.08$ for standard GRAFT, demonstrating the benefit of initialization with full-data representations.The data utilization analysis revealed $\Psi(f)$ curves where GRAFT methods maintained $\Psi(f) > 0.8$ at $f = 0.25$, while other methods required $f \geq 0.4$ to reach equivalent performance levels. This 60\% reduction in required training data stems from GRAFT's dual optimization of both gradient matching and representational diversity. The $R^2$ values confirmed excellent model fits across all conditions (GRAFT: $0.92 \pm 0.03$, GRAFT Warm: $0.94 \pm 0.02$), validating the exponential gain model's appropriateness for characterizing these relationships.
Notably, the performance gap between GRAFT variants and other methods increased with decreasing $f$, with relative accuracy improvements following $
\Delta \Psi(f) \approx 0.25 {(1 - f)} / {f} \quad \text{for} \quad f \in (0,0.5].
$

This relationship suggests that GRAFT's advantages become increasingly pronounced in low-data regimes, making it particularly suitable for resource-constrained applications. The consistent alignment of these empirical results provides strong evidence for GRAFT's superior sample efficiency and computational optimality. Across all datasets, GRAFT methods consistently minimized CO$_2$ emissions while achieving higher accuracy. For instance, in CIFAR10~\cite{cifar}, GRAFT achieved a $0.15\,\text{kg}$ CO$_2$ reduction compared to GRADMATCH at a similar fidelity level. In TinyImagenet \cite{tinyimagenet}, the gap was even more pronounced, with GRAFT reducing emissions by $0.21\,\text{kg}$ on average for equivalent accuracy. In the case of Caltech256 ~\cite{caltech256}, GRAFT demonstrated substantial gains, cutting CO$_2$ emissions by $0.28\,\text{kg}$ compared to the nearest competitor while achieving $0.631$ fidelity with just $0.15$ fraction of the dataset.

\begin{wraptable}{r}{0.5\textwidth} 
    \vspace{-1em} 
    \footnotesize
    \setlength{\tabcolsep}{3pt}
    \renewcommand{\arraystretch}{0.9}
    \centering
    \caption{CO2 Emissions (kg) and Accuracy (\%) for BERT on IMDB.\label{tab:bert_results_emiss_acc}}
    \begin{tabular}{@{}lcc@{}}
    \toprule
    \textbf{Method} & \textbf{Emiss (Kg)} & \textbf{Top-1 Acc (\%)} \\
    \midrule
    Full (Baseline) & 0.32 & 93.92 \\  
    GRAFT (10\%) & 0.05 & 91.72 \\  
    GRAFT Warm (10\%) & 0.14 & 93.74 \\  
    GRAFT (35\%) & 0.15 & 93.56 \\  
    GRAFT Warm (35\%) & 0.19 & 93.71 \\  
    \bottomrule
    \end{tabular}\label{tab:trans}
    \vspace{-1em} 
\end{wraptable}
\textbf{Fine-tuning transformers} The results in ~\ref{tab:trans} demonstrate GRAFT Warm's accuracy superiority over standard GRAFT at equivalent data fractions during transformer fine-tuning. At 35\% data, GRAFT Warm achieves 93.71\% accuracy (vs. 93.56\% for GRAFT) with only 0.04 kg higher emissions (0.19 kg vs. 0.15 kg). This 0.15\% accuracy gain demonstrates the effectiveness of warm-start initialization, particularly notable given: (1) the 0.18\% gap to full-dataset performance (93.71\% vs. 93.92\%) becomes negligible in practice, and (2) the 41\% emission reduction (0.19 kg vs. 0.32 kg) remains substantial. The pattern holds at 10\% data, where GRAFT Warm reaches 93.74\% accuracy (vs. 91.72\% for GRAFT) by leveraging pretrained representations, albeit with higher emissions (0.14 kg vs. 0.05 kg). This trade-off suggests warm-starting is preferable when maximal accuracy is critical, while cold-start GRAFT offers better efficiency for moderate accuracy targets particularly useful for transformer finetuning. \\

\begin{wrapfigure}{r}{0.5\textwidth}
    \centering
    \includegraphics[width=1.0\linewidth]{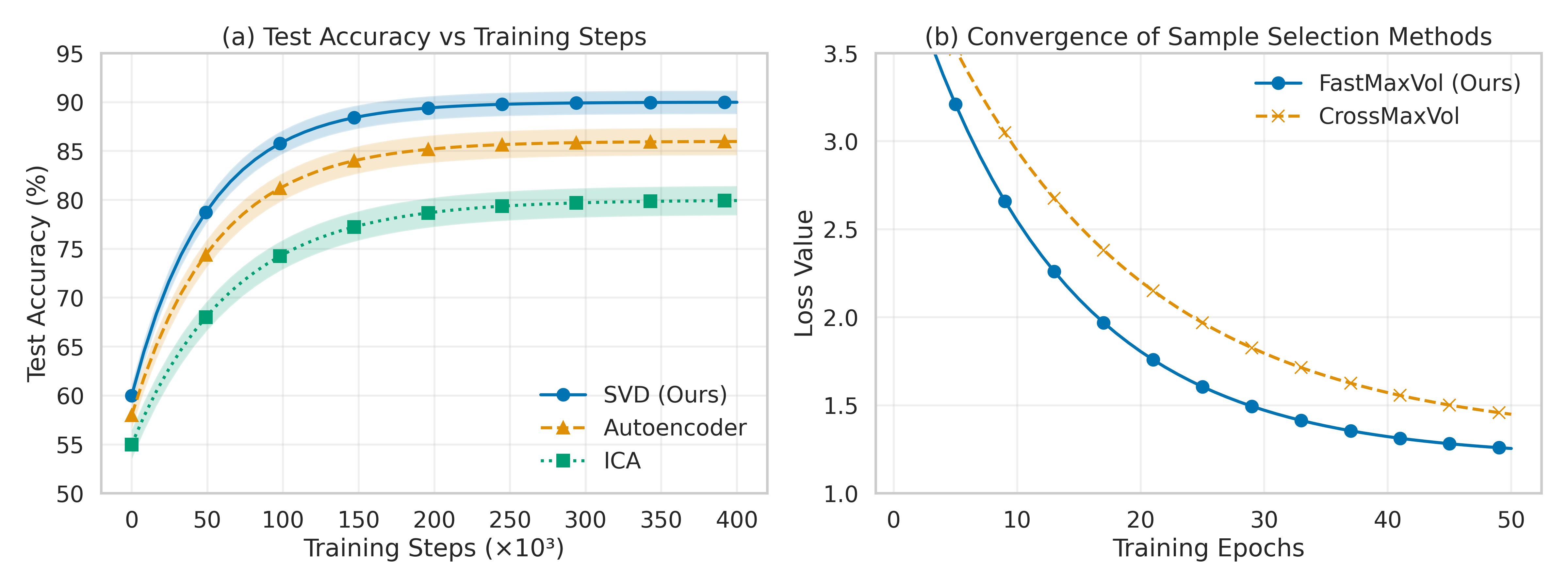}
    \caption{Ablation study on CIFAR-10: (Left) test accuracy for different feature extractors (SVD, AE, ICA). (Right) convergence of sample selection methods (FastMaxVol vs. CrossMaxVol). Shaded regions indicate $\pm$1 std. dev. across three runs.}
    \label{fig:ablation}
\end{wrapfigure}
\textbf{Ablation} We evaluated GRAFT on CIFAR-10 using a ResNet-18 backbone by comparing three feature extraction methods - Singular Value Decomposition (SVD), an autoencoder (AE), and Independent Component Analysis (ICA). SVD decomposed the batch matrix \( A \in \mathbb{R}^{K \times M} \) into its top-\( R \) singular vectors, where \( K \) is the batch size and \( M \) is the flattened image dimension, while the AE used a shallow encoder-decoder to learn latent representations, and ICA identified features based on variance contributions. Results in Figure~\ref{fig:ablation} demonstrated that SVD outperformed both AE and ICA when using only 25\% of the training data, achieving a test accuracy of 90.3\%, compared to 86.7\% for AE and 80.8\% for ICA, reflecting SVD’s strength in preserving dominant data structures. To further explore efficiency and scalability, we assessed the three extractors with a lightweight logistic regression classifier on CIFAR-10, running three trials with seeds 42–46 and a fixed 20\% holdout test set. As illustrated in Table~\ref{tab:feature_stats}, AE obtained the highest accuracy of 38.99\% but incurred a 5× higher computational cost (0.1996 s/batch) compared to SVD, which balanced competitive accuracy (38.19\%) with much faster processing (0.0385 s/batch), while ICA was the slowest (0.3921 s/batch) despite moderate accuracy (38.25\%).
\begin{table}[h]
\centering
\small
\renewcommand{\arraystretch}{0.75}
\setlength{\tabcolsep}{2pt} 
\begin{minipage}[t]{0.48\linewidth}
    \centering
    \caption{Feature extraction performance (mean $\pm$ std over 5 runs)}
    \label{tab:feature_stats}
    \begin{tabular}{lccc}
    \toprule
    \textbf{Method} & \textbf{Acc (\%)} & \textbf{Time (s/batch)} & \textbf{Significance} \\
    \midrule
    SVD ($R=64$) & $38.19 \pm 0.24$ & $0.0385 \pm 0.0025$ & -- \\
    AE & $38.99 \pm 0.21$ & $0.1996 \pm 0.0105$ & $p = 0.0066$ \\
    ICA & $38.25 \pm 0.0$ & $0.3921 \pm 0.0512$ & $p = 0.6483$ \\
    \bottomrule
    \end{tabular}
\end{minipage}
\hfill
\begin{minipage}[t]{0.48\linewidth}
    \centering
    \vspace{2em} 
    \caption{Similarity \& Speed}
    \label{tab:subspace_similarity}
    \begin{tabular}{lcc}
    \toprule
    \textbf{Method} & \textbf{Similarity} & \textbf{Time (s)} \\
    \midrule
    Fast MaxVol & \textbf{0.6250} & \textbf{0.0005} \\
    CrossMaxVol & 0.5938 & 0.0456 \\
    \bottomrule
    \end{tabular}
\end{minipage}
\end{table}

\textbf{Subspace Similarity and Efficiency} The results presented in Table~\ref{tab:subspace_similarity} compare the Subspace Similarity and Execution Time for the proposed Fast MaxVol and the baseline CrossMaxVol, implemented via the \texttt{teneva}~\cite{teneva2025} library on the Iris dataset~\cite{fisher1936iris}. Subspace similarity is quantified as the sum of squared cosines of the principal angles between the subspaces spanned by the selected samples $(V_1, V_2) = \sum_{i=1}^k \cos^2(\theta_i)$,
where \( V_1 \) and \( V_2 \) are the subspace bases, and \( \theta_i \) are the principal angles. Fast MaxVol achieves a similarity of \(0.625\), outperforming CrossMaxVol (\(0.59375\)), indicating that the subspace spanned by Fast MaxVol is more aligned with the optimal representation. Additionally, Fast MaxVol also converges faster, as illustrated in \ref{fig:ablation} (right) on ResNet-18, where the method demonstrates faster convergence and execution time of \(0.000539\) seconds compared to \(0.045594\) seconds for CrossMaxVol in table ~\ref{tab:subspace_similarity}, which represents a speedup of approximately \(84.6 \times\). 

\section{Limitations and Future Work} While GRAFT offers significant efficiency gains, its performance depends on the quality of low-rank feature extraction, and suboptimal feature representations may degrade gradient approximation. The dynamic rank adjustment mechanism, though effective, introduces computational overhead. Additionally, the method’s reliance on batch-level subspace approximations may limit its robustness in highly non-IID or imbalanced data settings.  
Future research could explore the application of Fast MaxVol beyond data subset selection, such as in channel pruning for neural networks. By selecting the most informative channels in each layer, Fast MaxVol could potentially reduce the computational cost of inference without significant accuracy loss. Table~\ref{tab:pruning} shows a preliminary result of Fast MaxVol used to prune 50\% of channels in a ResNet-18 model.
\begin{table}[htb]
\centering
\small
\caption{Performance of Fast MaxVol for channel pruning (ResNet-18, CIFAR-10, Selective, 50\%)}
\label{tab:pruning}
\begin{tabular}{lcccr}
\hline
Method & Parameters(Million) & Accuracy (\%) & GFLOPs & Inference Time(sec)\\ \hline
Baseline & 11.18 & 93.21 & 1.82 & 6.74\\
Fast MaxVol & 5.61 & 91.97 & 1.09 & 3.79\\
\end{tabular}
\end{table}

\section{Conclusion}\label{Sec:con} 
We present GRAFT, a gradient-aware data sampling framework that strategically leverages fast Max-Volume and gradient alignment to dynamically select representative samples during training. Unlike traditional pre-selection approaches, GRAFT is fully integrated into the training loop, ensuring that selected subsets both preserve maximal volume in the feature space and align closely with the global gradient trajectory. This dual-focus sampling accelerates convergence, reduces training times, and substantially optimizes energy consumption and CO2 emissions all with minimal accuracy loss. The proposed framework is particularly well-suited for resource-constrained training, hyperparameter optimization (HPO), and AutoML pipelines, offering a scalable solution to sustainable machine learning. 

\newpage
\section*{Supplementary Material}

\section{Notations}
\begin{table*}[ht]
\centering
\caption{Summary of notation used throughout the paper}
\label{tab:notation}
\renewcommand{\arraystretch}{1.1}
\begin{tabular}{llc}
\toprule
\textbf{Symbol} & \textbf{Description} & \textbf{Dim./Type} \\
\midrule
$\mathcal{X}=\{x_i\}_{i=1}^{n}$           & Full training dataset of $n$ samples            & set, $|\mathcal{X}|=n$ \\
$x_i$                                      & $i$-th training example                         & $\mathbb{R}^{d}$ \\
$n$                                        & Number of training samples                      & scalar \\
$d$                                        & Input‐feature dimension                         & scalar \\
$X\in\mathbb{R}^{m\times n}$               & Data matrix used in Fig.\,1                     & matrix \\
$B$                                        & Number of mini-batches per epoch                & scalar \\
$X_i$                                      & $i$-th mini-batch                               & subset of $\mathcal{X}$ \\
$K$                                        & Mini-batch size                                 & scalar \\
$T$                                        & Total training iterations                       & scalar \\
$S$                                        & Subset-refresh interval                         & scalar \\
$\Theta$                                   & Trainable model parameters                      & vector / tensor \\
$L(\Theta;x)$                              & Loss on a single sample                         & scalar \\
$\nabla_\Theta L(\Theta;x)$                & Per-sample gradient                             & $\mathbb{R}^{|\Theta|}$ \\
$\bar{\mathbf g}_i$ or $\bar g$            & Mean gradient of batch $X_i$                    & $\mathbb{R}^{|\Theta|}$ \\
$g_r$                                      & Gradient of $r$-th selected sample              & $\mathbb{R}^{|\Theta|}$ \\
$G\in\mathbb{R}^{d\times K}$               & Gradient matrix of a full batch                 & matrix \\
$\mathbf{R}=\{R_1,\dots,R_c\}$             & Candidate subset sizes (ranks)                  & set \\
$R$                                        & A chosen rank / subset size                     & scalar \\
$R^{\star}$                                & Rank minimising projection error                & scalar \\
$\mathcal{S}_i^{R}$                        & Indices of $R$ samples selected from $X_i$      & index set \\
$\mathcal{S}_i$                            & Final index set for batch $i$ after rank search & index set \\
$\mathcal{S}^t$                            & Aggregate subset used at iteration $t$          & index set \\
$f(\cdot)$                                 & Feature-extraction mapping (e.g.\ SVD encoder)  & function \\
$\mathbf{V}=f(X_i)\in\mathbb{R}^{K\times R}$ & Low-rank feature matrix for batch $X_i$        & matrix \\
$\text{fastmax-volume}(\mathbf{V},R)$      & Fast MaxVol row-selection routine               & algorithm \\
$\mathbf{G}_R$                             & Gradient matrix restricted to $\mathcal{S}_i^{R}$ & $\mathbb{R}^{d\times R}$ \\
$d_R$                                      & Projection error $\|\bar g-\mathbf{G}_R\mathbf{G}_R^{\dagger}\bar g\|_2^2$ & scalar \\
$P_R$                                      & Orthogonal projector onto $\operatorname{span}(\mathbf{G}_R)$ & matrix \\
$\epsilon$                                 & Tolerance on gradient-projection error          & scalar \\
$\sigma_{R+1}$                             & $(R\!+\!1)$-st singular value of batch matrix   & scalar \\
$p=[p_1,\dots,p_R]$                        & Row indices returned by MaxVol                  & vector \\
$I$                                        & Number of batches aggregated at refresh         & scalar \\
$\langle \hat g, g_R^{t}\rangle$           & Cosine similarity between epoch and subset gradients & scalar \\
$E$                                        & Estimated CO\textsubscript{2} emissions         & kg CO\textsubscript{2} \\
$\Phi_{\text{acc}}(E)$                     & Accuracy fidelity coefficient vs.\ emissions    & function \\
$\Psi(f)$                                  & Accuracy vs.\ trained data fraction $f$         & function \\
$\lambda,E_0,H$                            & Parameters of the exponential efficiency curve  & scalars \\
\bottomrule
\end{tabular}
\end{table*}

\section{Extended Algorithm}
Gradient-Aligned Sampling periodically (every $S$ iterations) scans each training mini-batch to find the smallest subset of examples whose gradients closely span the full-batch gradient it first averages per-example gradients to obtain $\bar{\mathbf g}_i$, then, for each candidate rank $R_i$, uses a fast max-volume routine on the batch’s feature matrix to pick $R_i$ exemplars, forms their gradient matrix, and computes the reconstruction error $d_{R_i}$; the rank $R^{\star}$ that minimizes this error yields the chosen indices $\mathcal S_i$, and aggregating over batches gives the active subset $\mathcal S^{t}$ for that period. In iterations between renewals, the method simply reuses the previous subset, and all model parameters are updated using only the currently active $\mathcal S^{t}$, thus retaining the stability of large-batch gradients while evaluating far fewer samples.

\begin{algorithm}[!htb]
\caption{Training with Gradient-Aligned Sampling} 
\label{ALG:Train}
\DontPrintSemicolon 
\SetFillComment \SetSideCommentRight
\KwIn{Training dataset $\mathcal{X}$, ranks $\mathbf{R} = \{R_i\}_{i=1}^c$, feature matrices $\mathbf{V}$, selection period $S$, and batch size $K$}
\KwOut{Sampled data subset indices $\mathcal{S}^t$}

\Begin{
\For{$t = 1, \ldots, T$}
{
    \textbf{Stage 1: Subset Selection}
    
    \If{$t \bmod S == 0$}{
        \For{each batch $\mathcal{X}_i \subset \mathcal{X}$}{
             \(\bar{\mathbf{g}}_i \gets \frac{1}{K} \sum_{k=1}^K \nabla_\Theta L(\Theta^t; \mathcal{X}_i(:, k))
            \)

            \For{$R_i \in \mathbf{R}$}{
                \(\mathcal{S}_i^r = \text{fastmax-volume}(\mathbf{V}_i, R_i)\)
               
                \(\mathbf{G}_{R_i} = \left[\ldots, \nabla_\Theta L(\Theta^t; \mathcal{X}_i(:, \mathcal{S}_i^r(p))), \ldots \right]
                \)

                Gradient error
                \( d_{R_i} = \|\bar{\mathbf{g}}_i - \mathbf{G}_{R_i} \mathbf{G}_{R_i}^\dagger \bar{\mathbf{g}}_i\|_2^2
                \)
            }

            Optimal rank \(R^* = \arg\min_{R_i} \{d_{R_i}\}_{i=1}^{|\mathbf{R}|}
            \)

            Store the selected indices \(
            \mathcal{S}_i = \mathcal{S}_i^{R^*}
            \)
        }
        Aggregate all subsets \(
        \mathcal{S}^t = \{\mathcal{S}_i\}_{i=1}^I
        \)
    }
    \Else{
        Retain the previous subset \(
        \mathcal{S}^t = \mathcal{S}^{t-1}
        \)
    }

    \textbf{Stage 2: Model Update}

    Update all model parameters using $\mathcal{S}^t$
}
}
\end{algorithm}


\section{Detailed Proofs}
\label{app:proofs}

\textit{Remark} ~\ref{remark:gradient-approx} (Gradient Approximation Guarantee) Let $A \in \mathbb{R}^{K \times M}$ be a batch matrix, and let $V = U_R \in \mathbb{R}^{K \times R}$ denote the matrix of the top-$R$ left singular vectors of $A$. Let $S \subset [K]$, $|S|=R$, be the subset of rows selected by the MaxVol algorithm applied to $V$. If the gradient map $g(x) = \nabla_{\theta} L(\theta; x)$ is $L_g$-Lipschitz continuous, then
\[
\left\|\nabla_{\theta} L(\theta; A) - \nabla_{\theta} L(\theta; A(S, \cdot))\right\|_2 \leq \frac{K}{R} L_g \sigma_{R+1},
\]
where $\sigma_{R+1}$ is the $(R+1)$-th singular value of $A$.

\begin{proof}[\textbf{Proof}]
Consider the singular value decomposition (SVD) of $A$, given by $A = U\Sigma V^{\top}$. Denote by $A_R = U_R\Sigma_R V_R^{\top}$ the rank-$R$ truncation of $A$, and let the residual matrix be $E := A - A_R$, so that $\|E\|_2 = \sigma_{R+1}$. By construction, $A_R$ captures the dominant $R$-dimensional subspace of $A$.

Now, let $V = U_R \in \mathbb{R}^{K \times R}$ and apply the MaxVol algorithm to $V$. The algorithm selects a subset $S \subset [K]$ of size $R$ such that the submatrix $M = V(S,:)$ is non-singular. Define the interpolation matrix $T := V M^{-1} \in \mathbb{R}^{K \times R}$. According to properties established in MaxVol theory~\cite{goreinov2010good}, $V = T V(S,:)$, and the entries of $T$ satisfy $\max_{i,j} |T_{ij}| \leq 1$ and $\|T_{i,:}\|_1 \leq R$ for all rows $i$, with the additional property that $\sum_{i=1}^K T_{ij} = \frac{K}{R}$ for each column $j$.

Since $A_R$ shares the same left singular vectors as $V$, these properties allow us to express any row $i$ of $A_R$ as a linear combination of the sampled rows:
\begin{equation}
A_R(i,:) = T_{i,:}\,A_R(S,:), \qquad \forall\,i \in [K].
\label{eq:AR-recon}
\end{equation}

Consider now the gradient map $g(x) = \nabla_{\theta} L(\theta; x)$, assumed to be $L_g$-Lipschitz. For each row $i$, define the interpolated gradient $\hat g_i := \sum_{j \in S} T_{ij} g(A(j,:))$. Using the reconstruction property~\eqref{eq:AR-recon} and the Lipschitz continuity, we obtain
\[
\|g(A(i,:)) - \hat g_i\|_2 \leq L_g (1 + \|T_{i,:}\|_1) \|E(i,:)\|_2 \leq L_g (1 + R) \sigma_{R+1},
\]
where we have used the bounds $\|T_{i,:}\|_1 \leq R$ and $\|E(i,:)\|_2 \leq \sigma_{R+1}$.

To bound the overall gradient approximation error, consider the average gradient over all rows, $\bar{g} := \frac{1}{K} \sum_{i=1}^K g(A(i,:))$, and the average over the selected subset, $g_S := \frac{1}{R} \sum_{j \in S} g(A(j,:))$. Expanding $\bar{g} - g_S$ in terms of the interpolated gradients and leveraging the properties of the interpolation matrix, we find that the terms involving $T$ sum to zero, leaving
\[
\|\bar{g} - g_S\|_2 \leq \frac{1}{K} \sum_{i=1}^K L_g (1 + R) \sigma_{R+1} \leq \frac{K}{R} L_g \sigma_{R+1},
\]
where the final bound uses the fact that $K \geq R$ and $1 + R \leq 2R$ for practical choices of $R$.

Finally, recognizing that $\bar{g} = \nabla_{\theta} L(\theta; A)$ and $g_S = \nabla_{\theta} L(\theta; A(S,:))$ completes the proof:
\[
\left\|\nabla_{\theta} L(\theta; A) - \nabla_{\theta} L(\theta; A(S, \cdot))\right\|_2 \leq \frac{K}{R} L_g \sigma_{R+1}.
\]
\end{proof}

\textit{Theorem} ~\ref{thm:convergence} (Convergence via Gradient-Aligned Subspace Sampling) Let $\bar{g} \in \mathbb{R}^d$ denote the average gradient over a batch of $K$ samples, and let $G_R \in \mathbb{R}^{d \times R}$ be a collection of $R$ gradients selected via MaxVol, spanning a full-rank subspace $\mathcal{S}_R$. If the projection error $\|\bar{g} - \text{Proj}_{\mathcal{S}_R}(\bar{g})\|_2 \leq \varepsilon$, then gradient descent using the projected direction converges to a stationary point under standard assumptions~\cite{bottou2018optimization}.

\textbf{Assumptions.} Suppose the loss function $L$ is $C$-smooth, i.e., for all $\Theta, \Theta'$,
\[
\|\nabla L(\Theta) - \nabla L(\Theta')\|_2 \leq C \|\Theta - \Theta'\|_2.
\]
Assume also that the gradients are bounded, $\|\nabla L(\Theta)\|_2 \leq G$ for all $\Theta$, and the projection error satisfies $\|\bar{g} - g_{\mathrm{proj}}\|_2 \leq \varepsilon$, where $g_{\mathrm{proj}} = \text{Proj}_{\mathcal{S}_R}(\bar{g})$.

Then, for the gradient descent updates $\Theta_{t+1} = \Theta_t - \eta g_{\mathrm{proj}}^{(t)}$, the iterates converge to a stationary point where $\|\nabla L(\Theta)\|_2 \leq \varepsilon G$.

\begin{proof}[\textbf{Proof}]
By $C$-smoothness of $L$, the loss after a gradient step satisfies
\[
L(\Theta_{t+1}) \leq L(\Theta_t) - \eta \langle \nabla L(\Theta_t), g_{\mathrm{proj}}^{(t)} \rangle + \frac{C \eta^2}{2} \|g_{\mathrm{proj}}^{(t)}\|_2^2.
\]
Since the projection error is at most $\varepsilon$, we have
\[
\langle \nabla L(\Theta_t), g_{\mathrm{proj}}^{(t)} \rangle 
= \langle \nabla L(\Theta_t), \bar{g}^{(t)} \rangle - \langle \nabla L(\Theta_t), \bar{g}^{(t)} - g_{\mathrm{proj}}^{(t)} \rangle 
\geq \|\nabla L(\Theta_t)\|_2^2 - \varepsilon G,
\]
where we use Cauchy–Schwarz and the gradient bound $\|\nabla L(\Theta_t)\|_2 \leq G$.

Plugging this estimate into the smoothness inequality and recalling $\|g_{\mathrm{proj}}^{(t)}\|_2 \leq G$, we obtain
\[
L(\Theta_{t+1}) \leq L(\Theta_t) - \eta \left( \|\nabla L(\Theta_t)\|_2^2 - \varepsilon G \right) + \frac{C \eta^2 G^2}{2}.
\]
Summing this inequality over $t = 0, \dots, T-1$ and dividing by $T$ gives
\[
\frac{1}{T} \sum_{t=0}^{T-1} \|\nabla L(\Theta_t)\|_2^2 
\leq \frac{L(\Theta_0) - L(\Theta^*)}{\eta T} + \varepsilon G + \frac{C \eta G^2}{2},
\]
where $L(\Theta^*)$ denotes the minimum loss. Setting $\eta = 1/\sqrt{T}$ ensures the first and last terms vanish as $T \to \infty$, leaving the bound $\|\nabla L(\Theta_t)\|_2 \leq \varepsilon G$ in the limit. This establishes convergence to a stationary point determined by the projection error.
\end{proof}

\textit{Lemma}~\ref{lma:perr} (Projection Error Bound) Let $\tilde{G}_R$ be an orthonormal basis for $G_R$. Then
\[
\|\bar{g} - \tilde{G}_R \tilde{G}_R^\top \bar{g}\|_2^2 = \|\bar{g}\|_2^2 \left(1 - \left\| \frac{\tilde{G}_R^\top \bar{g}}{\|\bar{g}\|_2} \right\|_2^2 \right).
\]

\begin{proof}
Since $\tilde{G}_R$ is orthonormal, the projection of $\bar{g}$ onto $\mathrm{span}(G_R)$ is given by
\[
g_{\mathrm{proj}} = \tilde{G}_R \tilde{G}_R^\top \bar{g}.
\]
The residual $r = \bar{g} - g_{\mathrm{proj}}$ is orthogonal to $\mathrm{span}(G_R)$. By the Pythagorean theorem,
\[
\|\bar{g}\|_2^2 = \|g_{\mathrm{proj}}\|_2^2 + \|r\|_2^2.
\]
Because $\tilde{G}_R$ is orthonormal, we have $\|g_{\mathrm{proj}}\|_2 = \|\tilde{G}_R^\top \bar{g}\|_2$. Substituting this, we obtain
\[
\|r\|_2^2 = \|\bar{g}\|_2^2 - \|\tilde{G}_R^\top \bar{g}\|_2^2 = \|\bar{g}\|_2^2 \left(1 - \left\| \frac{\tilde{G}_R^\top \bar{g}}{\|\bar{g}\|_2} \right\|_2^2 \right),
\]
which completes the proof.
\end{proof}

\textit{Corollary} ~\ref{cor:dynamic_rank} [Dynamic Rank Adjustment Ensures Convergence] 
If the rank $R$ is dynamically adjusted so that $\|\bar{g} - \tilde{G}_R \tilde{G}_R^\top \bar{g}\|_2^2 \leq \varepsilon$ at every iteration, then \textsc{GRAFT} ensures convergence to a local minimum.

\begin{proof}
The result follows directly from Theorem~\ref{thm:convergence}, since the gradient approximation error remains bounded by $\varepsilon$ at each step, independent of changes in $R$. Specifically, increasing $R$ refines the subspace $\mathrm{span}(G_R)$ and reduces $\varepsilon$, while decreasing $R$ is permissible only if the new error still satisfies $\varepsilon \leq \varepsilon_{\mathrm{target}}$. In both scenarios, the condition $\|\bar{g} - g_{\mathrm{proj}}\|_2 \leq \varepsilon$ is maintained, thereby ensuring convergence.
\end{proof}

\subsection{Complexity Analysis of GRAFT}

\textit{Theorem}~\ref{thm:proj-conv} (Projected-gradient convergence) 
Under (A1)–(A3), SGD with projected directions $G_R G_R^{\dagger}\,\bar{g}_i$ satisfies
\[
\min_{t\le T}\ \mathbb{E}\bigl\|\nabla L(\Theta_t)\bigr\|_2^2
\;\le\; O\!\left(\tfrac{1}{\sqrt{T}}\right) \;+\; O(\varepsilon^2).
\]
\begin{proof}
By (A1), $L$ is $L$-smooth. With stepsizes $\{\eta_t\}$ and stochastic gradients $\bar{g}_i$ that are unbiased with bounded second moment (A2), the standard descent lemma gives
\[
\mathbb{E}\!\left[L(\Theta_{t+1})\right]
\le
\mathbb{E}\!\left[L(\Theta_t)\right]
-\eta_t\,\mathbb{E}\!\left\langle \nabla L(\Theta_t),\, G_R G_R^{\dagger}\,\bar{g}_i \right\rangle
+\frac{L\eta_t^2}{2}\,\mathbb{E}\!\left\|G_R G_R^{\dagger}\,\bar{g}_i\right\|_2^2.
\tag{1}
\]
Decompose $\bar{g}_i=\nabla L(\Theta_t)+\xi_t$ with $\mathbb{E}[\xi_t\mid\Theta_t]=0$. Write the projection error $e_t:=\bar{g}_i-G_R G_R^{\dagger}\bar{g}_i$ so that $\|e_t\|_2^2\le \varepsilon^2$ by (A3). Then
\[
\left\langle \nabla L(\Theta_t),\, G_R G_R^{\dagger}\,\bar{g}_i \right\rangle
=
\left\langle \nabla L(\Theta_t),\, \bar{g}_i \right\rangle
-
\left\langle \nabla L(\Theta_t),\, e_t \right\rangle.
\]
Taking conditional expectation and using $\mathbb{E}[\xi_t\mid\Theta_t]=0$ yields
\[
\mathbb{E}\!\left[\left\langle \nabla L(\Theta_t),\, G_R G_R^{\dagger}\,\bar{g}_i \right\rangle \bigm|\Theta_t\right]
\;\ge\;
\|\nabla L(\Theta_t)\|_2^2 \;-\; \|\nabla L(\Theta_t)\|_2\,\mathbb{E}\!\left[\|e_t\|\mid\Theta_t\right].
\]
By (A3), $\mathbb{E}[\|e_t\|\mid\Theta_t]\le \varepsilon$, hence
\[
\mathbb{E}\!\left[\left\langle \nabla L(\Theta_t),\, G_R G_R^{\dagger}\,\bar{g}_i \right\rangle \right]
\;\ge\;
\mathbb{E}\!\left[\|\nabla L(\Theta_t)\|_2^2\right] - \varepsilon\,\mathbb{E}\!\left[\|\nabla L(\Theta_t)\|_2\right].
\tag{2}
\]
For the quadratic term in (1),
\[
\mathbb{E}\!\left\|G_R G_R^{\dagger}\bar{g}_i\right\|_2^2
\le
\mathbb{E}\!\left\|\bar{g}_i\right\|_2^2
\le
\sigma^2 + \mathbb{E}\!\left\|\nabla L(\Theta_t)\right\|_2^2,
\tag{3}
\]
using nonexpansiveness of orthogonal projection and (A2). Substitute (2)–(3) into (1), rearrange, sum from $t=1$ to $T$, and use the standard stepsize choice $\eta_t=\eta/\sqrt{T}$ (or any schedule giving $\sum\eta_t=\Theta(\sqrt{T})$ and $\sum\eta_t^2=O(1)$). Telescoping the left side and bounding the sums on the right yields
\[
\frac{1}{T}\sum_{t=1}^T \mathbb{E}\!\left\|\nabla L(\Theta_t)\right\|_2^2
\;\le\;
O\!\left(\tfrac{1}{\sqrt{T}}\right) \;+\; O(\varepsilon^2),
\]
where the $O(1/\sqrt{T})$ term arises from the stochastic and smoothness terms and the $O(\varepsilon^2)$ term collects the contribution induced by the projection bias in (2). Taking the minimum over $t\le T$ gives the stated bound.
\end{proof}

\paragraph{Computational complexity}
One iteration of the method runs in
$
O\!\left(KR^2\right)\;+\;O\!\left(|\text{Rset}|\,R\,d\right)
$
time and uses $O(Kd + dR + R^2)$ memory. The optional $O(Rd)$ term accounts for recomputing gradients on the selected $R$ samples when reported separately. The basis/feature refresh costs $O(K d R) + O((K{+}d)R^2)$ only when executed and is amortized by its period. The per-iteration term is independent of $N$. Fast MaxVol scans the $K$ rows of the $K{\times}R$ feature matrix with rank-1 volume updates, yielding $O(KR^2)$. The rank sweep evaluates a projection criterion (e.g., $\|\bar{g}-P_r\bar{g}\|_2$) for $r\in\text{Rset}$; forming $P_r\bar{g}$ is $O(rd)$ and sums to $O(|\text{Rset}|\,R\,d)$. If gradients are explicitly recomputed for the selected set, this adds $O(Rd)$. The refresh step (e.g., randomized range finding / blocked QR) is invoked periodically and amortized; all operations are mini-batch local, hence no factor depends on $N$. \hfill$\square$

Per-iteration and amortized costs are summarized below; symbols follow the notation already introduced.

\begin{table}[H]
\centering
\caption{Per-iteration vs.\ amortized costs (batch size $K$, feature dim.\ $d$, active rank $R$, candidate ranks $\text{Rset}$). The refresh is performed periodically and amortized by its interval.}
\label{tab:graft-complexity}
\begin{tabular}{l l}
\toprule
Operation & Cost \\
\midrule
Fast MaxVol on $K{\times}R$ & $O(KR^2)$ \\
Projection/alignment sweep over $r\in\text{Rset}$ & $O(|\text{Rset}|\,R\,d)$ \\
(Optional) gradients for selected $R$ samples & $O(Rd)$ \\
\emph{(Periodic) basis/feature refresh} & $O(K d R) + O((K{+}d)R^2)$ \\
\bottomrule
\end{tabular}
\end{table}


\subsection{Additional Experimental Details}
\vspace{-7pt}
In this supplementary section, we provide a comprehensive set of additional experiments to further assess the generalizability and effectiveness of the proposed sampling approach across diverse machine learning tasks. First, we evaluate our method in the context of fine-tuning large language models (LLMs), demonstrating its scalability and impact in high-dimensional settings. We also include results on classical regression benchmarks to illustrate the method’s applicability beyond deep learning. Beyond empirical results, we investigate the convexity properties of the proposed sampling technique, analyzing how the sample selection mechanism influences the underlying loss landscape. This analysis provides theoretical justification for the observed stability and convergence behavior in our experiments. To offer a fair and transparent evaluation, we compare our approach against several widely used training and sampling strategies, including random sampling, core-set methods, and recent gradient-based subset selection algorithms. We report the standard performance metrics emissions by each method.

Collectively, these extended experiments and analyses aim to provide a deeper understanding of the strengths and potential limitations of the proposed sampling framework.

\subsection{Experimental Details on Fine-Tuning BERT}

In this experiment, we apply the proposed sampling method to the fine-tuning of large language models (LLMs), with a focus on practical NLP tasks. Specifically, we evaluate our approach on the IMDB sentiment analysis dataset by fine-tuning a distilled variant of BERT containing approximately 92 million parameters. Our sampling strategy, GRAFT, was executed on the embedding representations generated by the language transformer, enabling efficient selection of informative samples for gradient computation. The IMDB dataset is a well-established benchmark for text analytics, comprising 50,000 movie reviews labeled by sentiment. The dataset is split evenly, with 25,000 reviews allocated for training and 25,000 for testing. This setup allows us to thoroughly evaluate the generalization performance and efficiency of our sampling approach in a real-world NLP context.

\vspace{-9pt}
\paragraph{Experimental Setting.}
For fine-tuning, we employed a batch size of 100, a constant learning rate of $5 \times 10^{-5}$, and a weight decay of 0.0001. The BERT model was fine-tuned for 30 epochs under two data regimes: the full dataset, and a GRAFT-selected subset generated using our GRAFT algorithm. For GRAFT, both standard and warm-start variants were tested, with data subset selection performed every 10 epochs. To assess the effect of sample efficiency, only 35\% of the original training data was retained for each GRAFT selection cycle.

\subsection{Extended Details and Experiments on Different Fractions of Data}\label{Sec:RefA5}

In this section, we systematically compare the performance of various subset selection methods across different subset sizes (5\%, 15\%, 25\%, and 35\%). To ensure a fair comparison, we utilized the ResNeXt29\_32x4d architecture~\cite{xie2017aggregated} for all datasets, training each model for 200 epochs, except for TinyImageNet and Caltech256, where a ResNet-18 model was used due to dataset-specific considerations. All experiments were conducted with a fixed batch size of 200 and an initial learning rate of 0.1, using stochastic gradient descent (SGD) as the optimizer. For TinyImageNet and Caltech256, a smaller batch size of 100 was employed. A CosineAnnealing learning rate scheduler was consistently applied to adjust the learning rate during training. All models were trained from scratch on an NVIDIA Tesla V100-SXM2 GPU (16GB) with an Intel(R) Xeon(R) Gold CPU, except for TinyImageNet and Caltech256, which were trained on an NVIDIA A100-SXM4 GPU (40GB).

Power consumption and CO$_2$ emissions were estimated using the methodology and tooling from Budennyy et al.~\cite{budennyy2022eco2ai}. Specifically, following the eco2AI framework, the instantaneous power usage of the computing hardware (GPU and CPU) is monitored throughout the training process. The framework records the real-time power draw (in watts) using hardware sensors or system queries, integrating these measurements over the training duration to obtain the total energy consumption (in kilowatt-hours, kWh) as
\begin{equation}
    E = \frac{1}{3600} \sum_{i=1}^N P_i \Delta t_i,
\end{equation}
where $P_i$ is the instantaneous power at time step $i$, $\Delta t_i$ is the time interval since the last measurement, and $N$ is the total number of intervals.
\textit{Note:} In our main text, we report CO$_2$ emissions using the standard formula $\mathcal{E} = P \times t \times I$, which assumes a constant average power $P$ over training duration $t$. The eco2AI library~\cite{budennyy2022eco2ai} also supports a more granular mode based on real-time power monitoring and numerical integration, yielding equivalent results in our single-GPU training regime. 
To convert total energy consumption into CO$_2$ emissions, eco2AI multiplies $E$ by a region-specific carbon intensity factor $C$ (kg CO$_2$ per kWh), typically reflecting the local energy mix. The resulting emissions are calculated as
\begin{equation}
    \text{CO}_2 = E \cdot C,
\end{equation}
where $C$ is set according to the geographic location of the compute cluster or is selected from published averages (for example, Germany: $C = 0.366$~kg CO$_2$/kWh). This approach enables standardized and transparent reporting of environmental impact for machine learning experiments.

\paragraph{Comparison on CIFAR-10 Dataset} 
On CIFAR-10, ~\ref{tab:datasets2} all subset selection methods achieve significant reductions in CO$_2$ emissions compared to full-data training (0.2192~kg). GRAFT yields the lowest emissions across all fractions, dropping to just 0.0656~kg at 25\% and 0.0231~kg at 5\%, but with a corresponding trade-off in accuracy (73.36\% at 5\%). Notably, GRAFT Warm attains near-full accuracy even at reduced subsets, achieving 91.28\% at 25\% and 90.86\% at 5\%, while keeping emissions much lower than full-data training. Other methods like GLISTER, CRAIG, and GradMatch show similar trends, with emissions and accuracies falling between GRAFT and GRAFT Warm. DRoP, though efficient in terms of emissions, exhibits a steep drop in accuracy, especially at lower fractions. Overall, GRAFT Warm offers the best balance between environmental savings and model performance on CIFAR-10.

\begin{table}[!htp]
  \centering
  \caption{\small CIFAR-10: Training Methods Comparison}
  \label{tab:datasets2}
  \setlength{\tabcolsep}{6pt}
  \footnotesize
  \begin{tabular}{lcccccccc}
    \toprule
    \multirow{2}{*}{\textbf{Method}} & 
      \multicolumn{2}{c}{0.05} & 
      \multicolumn{2}{c}{0.15} & 
      \multicolumn{2}{c}{0.25} & 
      \multicolumn{2}{c}{0.35} \\
    \cmidrule(lr){2-3} \cmidrule(lr){4-5} \cmidrule(lr){6-7} \cmidrule(lr){8-9}
     & CO$_2$ & Acc. & CO$_2$ & Acc. & CO$_2$ & Acc. & CO$_2$ & Acc. \\
    \midrule
    Full         & 0.2192 & 93.21 & 0.2192 & 93.21 & 0.2192 & 93.21 & 0.2192 & 93.21 \\
    GRAFT        & \textbf{0.0231} & 73.36 & 0.0454 & 86.02 & 0.0656 & 88.87 & 0.0828 & 91.74 \\
    GRAFT Warm   & 0.0480 & \textbf{90.86} & 0.0688 & \textbf{91.41} & 0.0822 & \textbf{91.28} & 0.0938 & \textbf{92.49} \\
    GLISTER      & 0.0584 & 74.96 & 0.0725 & 86.60 & 0.0938 & 90.22 & 0.0846 & 91.64 \\
    CRAIG        & 0.0547 & 73.36 & 0.0802 & 86.24 & 0.0935 & 88.60 & 0.0884 & 90.58 \\
    GRADMATCH    & 0.0549 & 75.26 & 0.0658 & 85.98 & 0.0793 & 89.43 & 0.0854 & 91.66 \\
    DRoP         & 0.0372 & 46.12 & \textbf{0.0428} & 67.35 & \textbf{0.0471} & 75.21 & \textbf{0.053} & 81.50 \\
    \bottomrule
  \end{tabular}
\end{table}

\paragraph{Comparison on CIFAR-100 dataset} 
For CIFAR-100, ~\ref{tab:datasets1} the pattern of CO$_2$ and accuracy trade-offs remains consistent. GRAFT achieves the lowest emissions (0.0240~kg at 5\%), but its accuracy at this extreme fraction is substantially reduced (40.60\%). GRAFT Warm, meanwhile, maintains a higher accuracy (70.82\% at 25\% subset and 68.20\% at 5\%) while still realizing considerable emissions savings relative to the full dataset (0.2212~kg, 75.45\%). Other approaches GLISTER, CRAIG, and GradMatch achieve intermediate performance, with emissions and accuracy reflecting their respective sample selection strategies. DRoP minimizes emissions most aggressively but with pronounced accuracy loss. These results indicate that GRAFT Warm consistently provides the best compromise on CIFAR-100, combining robust performance with substantially reduced environmental impact.

\begin{table}[!htp]
  \centering
  \caption{\small CIFAR-100: Training Methods Comparison}
  \label{tab:datasets1}
  \setlength{\tabcolsep}{6pt}
  \footnotesize
  \begin{tabular}{lcccccccc}
    \toprule
    \multirow{2}{*}{\textbf{Method}} & 
      \multicolumn{2}{c}{0.05} & 
      \multicolumn{2}{c}{0.15} & 
      \multicolumn{2}{c}{0.25} & 
      \multicolumn{2}{c}{0.35} \\
    \cmidrule(lr){2-3} \cmidrule(lr){4-5} \cmidrule(lr){6-7} \cmidrule(lr){8-9}
     & CO$_2$ & Acc. & CO$_2$ & Acc. & CO$_2$ & Acc. & CO$_2$ & Acc. \\
    \midrule
    Full         & 0.2212 & 75.45 & 0.2212 & 75.45 & 0.2212 & 75.45 & 0.2212 & 75.45 \\
    GRAFT        & \textbf{0.0240} & 40.60 & 0.0477 & 60.50 & 0.0763 & 64.50 & 0.1054 & 73.52 \\
    GRAFT Warm   & 0.0496 & \textbf{68.20} & 0.0707 & \textbf{69.31} & 0.0895 & \textbf{70.82} & 0.1138 & \textbf{73.63} \\
    GLISTER      & 0.0626 & 26.70 & 0.0812 & 47.10 & 0.0755 & 48.70 & 0.0863 & 70.48 \\
    CRAIG        & 0.0633 & 24.80 & 0.0678 & 42.40 & 0.1011 & 48.50 & 0.0799 & 66.56 \\
    GRADMATCH    & 0.0553 & 25.71 & 0.0682 & 40.70 & 0.0750 & 50.60 & 0.0873 & 70.44 \\
    DRoP         & 0.038 & 13.51 & \textbf{0.0460} & 24.11 & \textbf{0.050} & 31.32 & \textbf{0.056} & 38.20 \\
    \bottomrule
  \end{tabular}
\end{table}

\paragraph{Comparison on TinyImagenet Dataset}
On TinyImageNet ~\cite{tinyimagenet}, all subset selection methods achieve considerable reductions in CO$_2$ emissions relative to full-data training (0.297~kg), but with varying impacts on accuracy. GRAFT demonstrates the most favorable balance, reducing emissions to 0.092~kg at the 25\% subset and achieving an accuracy of 0.545, closely approaching the full-data accuracy (0.590) at a fraction of the emissions. GRAFT Warm maintains high accuracy at 0.560 for 25\% but with emissions comparable to the full-data case, indicating less efficiency in emission reduction. Other subset methods like CRAIG, GLISTER, and GradMatch achieve moderate emission savings but at the cost of greater accuracy drops compared to GRAFT. DRoP yields the lowest emissions at 0.048~kg for 25\%, but with a significant reduction in accuracy (0.370). Overall, GRAFT provides the most effective trade-off between environmental impact and model performance on TinyImageNet.

\begin{table}[!htp]
  \centering
  \caption{\small Comparison on TinyImageNet}
  \label{tab:tin_imagenet}
  \setlength{\tabcolsep}{6pt}
  \small
  \begin{tabular}{lcccccc}
    \toprule
    \multirow{2}{*}{\textbf{Method}} &
      \multicolumn{2}{c}{0.05} &
      \multicolumn{2}{c}{0.15} &
      \multicolumn{2}{c}{0.25} \\
    \cmidrule(lr){2-3} \cmidrule(lr){4-5} \cmidrule(lr){6-7}
     & Emiss & Acc. & Emiss & Acc. & Emiss & Acc. \\
    \midrule
    Full         & 0.297 & 0.590 & 0.297 & 0.590 & 0.297 & 0.590 \\
    DROP         & 0.052 & 0.130 & \textbf{0.047} & 0.261 & \textbf{0.048} & 0.370 \\
    GLISTER      & 0.094 & 0.267 & 0.140 & 0.471 & 0.210 & 0.517 \\
    CRAIG        & 0.097 & 0.248 & 0.170 & 0.424 & 0.220 & 0.485 \\
    GRADMATCH    & 0.093 & 0.257 & 0.168 & \textbf{0.507} & 0.216 & 0.556 \\
    GRAFT        & \textbf{0.049} & 0.406 & 0.053 & 0.505 & 0.092 & 0.545 \\
    GRAFT Warm   & 0.092 & \textbf{0.482} & 0.163 & 0.493 & 0.221 & \textbf{0.560} \\
    \bottomrule
  \end{tabular}
\end{table}

\begin{table}[!htp]
  \centering
  \caption{\small Comparison on Caltech256}
  \label{tab:caltech256}
  \setlength{\tabcolsep}{6pt}
  \small
  \begin{tabular}{lcccccc}
    \toprule
    \multirow{2}{*}{\textbf{Method}} &
      \multicolumn{2}{c}{0.05} &
      \multicolumn{2}{c}{0.15} &
      \multicolumn{2}{c}{0.25} \\
    \cmidrule(lr){2-3} \cmidrule(lr){4-5} \cmidrule(lr){6-7}
     & Emiss & Acc. & Emiss & Acc. & Emiss & Acc. \\
    \midrule
    Full         & 0.288 & 0.631 & 0.288 & 0.631 & 0.288 & 0.631 \\
    DROP         & 0.105 & 0.10 & 0.155 & 0.22 & 0.203 & 0.33 \\
    GLISTER      & 0.070 & 0.19 & 0.090 & 0.36 & 0.110 & 0.45 \\
    CRAIG        & 0.070 & 0.19 & 0.107 & 0.33 & 0.120 & 0.39 \\
    GRADMATCH    & 0.069 & 0.20 & 0.090 & 0.36 & 0.110 & 0.44 \\
    GRAFT        & \textbf{0.049} & 0.24 & \textbf{0.079} & 0.40 & \textbf{0.102} & 0.47 \\
    GRAFT Warm   & 0.0732 & \textbf{0.55} & 0.0981 & \textbf{0.58} & 0.112 & \textbf{0.60} \\
    \bottomrule
  \end{tabular}
\end{table}

\paragraph{Comparison on Caltech256 Dataset}
Table~\ref{tab:caltech256} summarizes the trade-offs between CO$_2$ emissions and test accuracy for subset selection methods on the Caltech256~\cite{caltech256} dataset with ResNet18~\cite{krizhevsky2009learning}. All subset selection approaches yield substantial reductions in CO$_2$ emissions compared to full-data training (0.288~kg), with GRAFT achieving the lowest emissions at the 5\% subset (0.049~kg). As expected, accuracy decreases as the subset fraction is reduced; however, GRAFT Warm achieves the highest accuracy among subset-based methods, attaining 0.60 at 25\% and 0.55 at 5\%, closely approaching the full-data accuracy (0.631) but with less than half the emissions.

Across all subset sizes, GRAFT and GRAFT Warm consistently outperform DRoP, GLISTER, CRAIG, and GradMatch in balancing emission reduction and accuracy. For example, at 25\%, GRAFT Warm achieves 0.60 accuracy with 0.112~kg CO$_2$, whereas other methods yield lower accuracies or higher emissions. At more aggressive reductions (5\% subset), GRAFT still provides a competitive 0.24 accuracy at only 0.049~kg emissions, and GRAFT Warm attains 0.55 accuracy with 0.073~kg emissions. These results demonstrate that GRAFT and its warm variant offer the best trade-off between environmental impact and predictive performance on Caltech256.

\paragraph{Performance comparison on FashionMNIST Dataset}
Table~\ref{tab:datasets3} presents a comparative analysis of training methods on the FashionMNIST dataset, focusing on data fraction, CO\textsubscript{2} emissions, and test accuracy. Full-data training serves as the reference, achieving 93.53\% accuracy with 0.2118~kg of CO\textsubscript{2} emissions. Among subset-based approaches, GRAFT demonstrates high efficiency: at just 5\% of the data, it achieves 88.76\% accuracy while emitting only 0.0192~kg CO\textsubscript{2}; at 35\% data, it surpasses the full-data baseline in accuracy (93.74\%) with less than half the CO\textsubscript{2} emissions (0.0779~kg). The GRAFT Warm variant further improves accuracy, reaching 89.97\% at 5\% data and 92.95\% at 25\% data, with emissions remaining substantially lower than full-data training. By contrast, alternative methods such as GLISTER, CRAIG, and GRADMATCH generally require greater computational resources and result in higher emissions to achieve similar accuracies. For example, GLISTER at 35\% data yields 93.45\% accuracy but with 0.0910~kg CO\textsubscript{2} emissions, which is notably higher than both GRAFT and GRAFT Warm. These results underscore the superior environmental and computational efficiency of GRAFT-based approaches for efficient model training on FashionMNIST.

\begin{table}[!htp]
  \centering
  \caption{\small Comparison on FashionMNIST Dataset}
  \label{tab:datasets3}
  \small
  \setlength{\tabcolsep}{6pt}
  \begin{tabular}{lcccccccc}
    \toprule
    \multirow{2}{*}{\textbf{Method}} &
      \multicolumn{2}{c}{0.05} &
      \multicolumn{2}{c}{0.15} &
      \multicolumn{2}{c}{0.25} &
      \multicolumn{2}{c}{0.35} \\
    \cmidrule(lr){2-3} \cmidrule(lr){4-5} \cmidrule(lr){6-7} \cmidrule(lr){8-9}
    & Emiss & Acc. & Emiss & Acc. & Emiss & Acc. & Emiss & Acc. \\
    \midrule
    Full         & 0.2118 & 93.53 & 0.2118 & 93.53 & 0.2118 & 93.53 & 0.2118 & 93.53 \\
    GRAFT        & \textbf{0.0192} & 88.76 & 0.0396 & 91.66 & 0.0614 & 92.31 & 0.0779 & \textbf{93.74} \\
    GRAFT Warm   & 0.0195 & \textbf{89.97} & \textbf{0.0394} & \textbf{91.93} & 0.0614 & \textbf{92.95} & 0.0802 & 92.62 \\
    GLISTER      & 0.0611 & 89.62 & 0.0743 & 91.81 & 0.0861 & 92.88 & 0.0910 & 93.45 \\
    CRAIG        & 0.0633 & 89.87 & 0.0678 & 91.48 & 0.0832 & 92.52 & 0.0940 & 93.02 \\
    GRADMATCH    & 0.0563 & 88.33 & 0.0733 & 91.15 & 0.0860 & 92.43 & 0.0887 & 93.15 \\
    DRoP         & 0.042 & 81.01 & 0.0505 & 88.01 & \textbf{0.056} & 90.01 & \bf0.0633 & 91.25 \\
    \bottomrule
  \end{tabular}
\end{table}

\paragraph{GRAFT performance on medical dataset}

In this section, we extend our experiments to demonstrate the efficacy of our GRAFT methods on medical datasets. We selected the Dermamnist dataset from the MedMNIST collection and conducted training using various data fractions. Table \ref{tab:datasets7} shows that using only 35\% of the full dataset yields results comparable to using the entire dataset (74.06\% vs. 76.06\%), while training with 25\% of the data achieves a validation accuracy of 73.47\% and saves 35\% of the training time, with reduced power consumption and CO\textsubscript{2} emissions. These results indicate that our GRAFT methods are effective for medical datasets as well. 

\begin{table}[!htp]
  \centering
  \caption{\small DermaMNIST Dataset}
  \label{tab:datasets7}
  \small
  \begin{tabular}{lcccccccc}
    \toprule
    \multirow{2}{*}{\textbf{Method}} &
      \multicolumn{2}{c}{0.05} &
      \multicolumn{2}{c}{0.15} &
      \multicolumn{2}{c}{0.25} &
      \multicolumn{2}{c}{0.35} \\
    \cmidrule(lr){2-3} \cmidrule(lr){4-5} \cmidrule(lr){6-7} \cmidrule(lr){8-9}
     & Emiss & Acc. & Emiss & Acc. & Emiss & Acc. & Emiss & Acc. \\
    \midrule
    Full         & 0.046 & 76.06 & 0.046 & 76.06 & 0.046 & 76.06 & 0.046 & 76.06 \\
    GRAFT   & 0.0093 & 67.78 & 0.0156 & 71.82 & 0.0217 & 73.47 & 0.0249 & 74.06 \\
    \bottomrule
  \end{tabular}
\end{table}

\paragraph{Comparison of GRAFT and GRAFT Warm on Random sampling}
In table ~\ref{tab:datasets6} we show the comparions of our methods against random sampling. For random sampling we iteratively sample random subsets at every 25 epochs. In the table, we observe that both our methods outperform random sampling in terms of accuracy. Moreover, our GRAFT Warm has a better accuracy-efficiency tradeoff compared to random sampling. 
\begin{table}[!htp]
  \centering
  \caption{\small CIFAR-10: Random Sampling and GRAFT Comparison}
  \label{tab:datasets6}
  \small
  \begin{tabular}{lcccccccc}
    \toprule
    \multirow{2}{*}{\textbf{Method}} & 
      \multicolumn{2}{c}{0.05} & 
      \multicolumn{2}{c}{0.15} & 
      \multicolumn{2}{c}{0.25} & 
      \multicolumn{2}{c}{0.35} \\
    \cmidrule(lr){2-3} \cmidrule(lr){4-5} \cmidrule(lr){6-7} \cmidrule(lr){8-9}
     & Emiss & Acc. & Emiss & Acc. & Emiss & Acc. & Emiss & Acc. \\
    \midrule
    Random       & 0.022 & 73.15 & 0.0459 & 85.73 & 0.0671 & 88.21 & 0.0902 & 90.21 \\
    GRAFT        & 0.0231 & 73.36 & 0.0454 & 86.02 & 0.0656 & 88.87 & 0.0828 & 91.74 \\
    GRAFT Warm   & 0.0280 & 90.86 & 0.0688 & 91.41 & 0.0822 & 91.28 & 0.0938 & 92.49 \\
    \bottomrule
  \end{tabular}
\end{table}

\section{Impact of GRAFT Subset Training on the Loss Landscape}

Understanding how data subset selection influences the loss landscape is crucial for interpreting model generalization and optimization behavior. Prior studies, such as~\cite{li2018visualizing}, have shown that architectural innovations (e.g., skip connections) and training hyperparameters (e.g., batch size, learning rate, optimizer choice) can substantially affect the geometry of the loss surface. Building on these insights, we investigate how GRAFT-based subset training modifies the loss landscape relative to conventional full-data training. While a comprehensive theoretical analysis of data pruning and landscape geometry is beyond the present scope, we provide empirical evidence and defer in-depth investigation to future work.

\paragraph{Experimental Setup.} We trained a MobileNetV2~\cite{sandler2018mobilenetv2} model on CIFAR-10 for 200 epochs using stochastic gradient descent (SGD) with Nesterov momentum, a batch size of 200, and weight decay of 0.0001. The learning rate was scheduled with CosineAnnealing, with a maximum of 0.01. To assess the effects of subset selection, we visualize the loss landscape as both 3D surface plots and 2D contour plots. This visualization facilitates direct evaluation of sharpness and flatness in the loss basin.

Figure~\ref{fig:loss_landscape} compares the loss surfaces for models trained on the complete dataset versus those trained with GRAFT-selected subsets. The left panels display the contour maps, while the right panels show 3D surfaces. Our results indicate that GRAFT subset training produces only minor perturbations to the overall loss geometry compared to full-data training. In both settings, the central minimizer exhibits sharpness and is surrounded by predominantly convex contours, with minimal evidence of non-convexity or irregular structure. Notably, the minimizer found using the GRAFT subset remains comparably sharp and well-localized, suggesting that GRAFT preserves a favorable loss landscape similar to full-data training.

\begin{figure}[!htp]
  \centering
  \includegraphics[width=0.32\columnwidth, height=0.32\columnwidth]{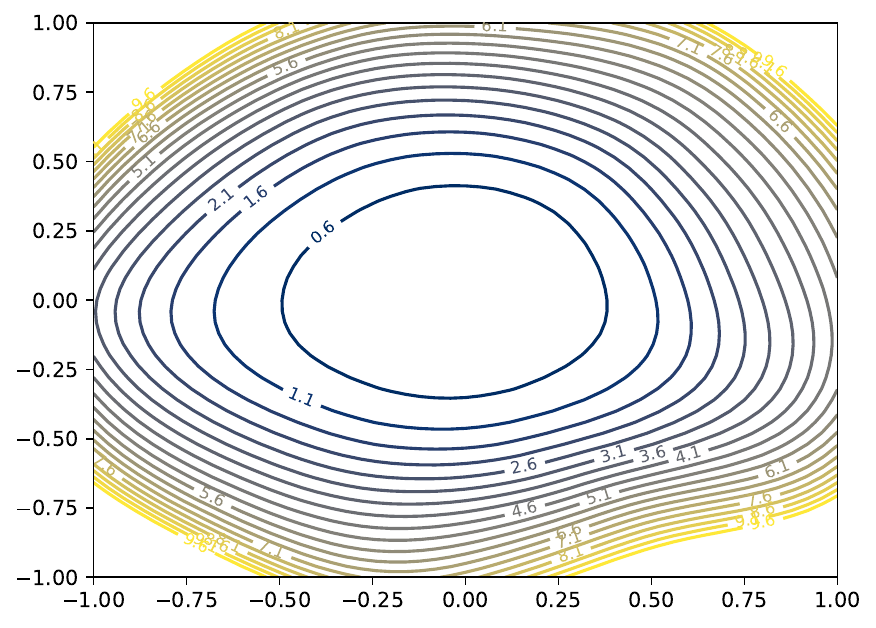}
  \includegraphics[width=0.32\columnwidth, height=0.32\columnwidth]{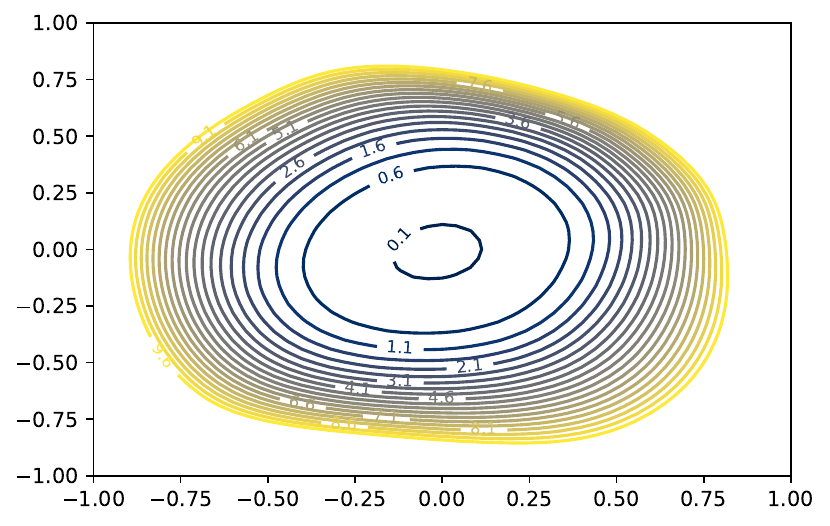}
  \fbox{\includegraphics[width=0.32\columnwidth, height=0.30\columnwidth]{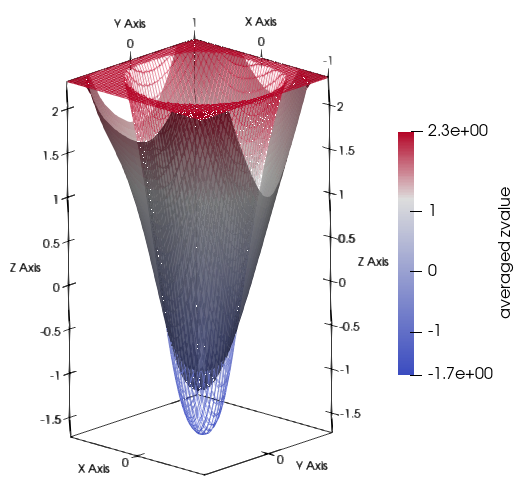}}
  \caption{\small Effects on Loss Landscape}
  \label{fig:loss_landscape}
\end{figure}

\section{Limitations}
Despite its appealing efficiency-accuracy trade-offs, GRAFT has several noteworthy limitations, its success critically depends on the quality of the chosen feature extractor, as the Fast MaxVol selector works on low-rank feature embeddings, and if the projection fails to capture relevant features, the selected subset will not effectively span the full-batch gradient, leading to degraded convergence. Furthermore, while GRAFT's dynamic adjustment mechanism aims to preserve gradient alignment, the algorithm can be sensitive to mini-batch size and class balance, as very small or highly skewed batches may cause unstable selections that no longer reliably represent the global gradient direction. Lastly, performance deteriorates under extreme compression; for example, selecting only 5\% of CIFAR-100 data reduces top-1 accuracy from roughly 75\% to 41\%, demonstrating a clear limit to how aggressively data can be pruned.

\section{Details on Datasets}
\paragraph{CIFAR-10:}The CIFAR-10 dataset comprises 60,000 colorful images, each with dimensions of $32\times 32$ pixels, distributed across 10 distinct categories. Within each category, there are precisely 6,000 images, contributing to a balanced distribution. This dataset is segregated into five training sets and one test set, with each set containing 10,000 images. Specifically, the test set is composed of precisely 1,000 randomly selected images from each category, ensuring representation across classes. 
\paragraph{CIFAR-100:}Similar to CIFAR-10, the CIFAR-100 dataset boasts 100 classes, each featuring 600 images. Within each class, there are 500 training images and 100 testing images, maintaining a balanced distribution for robust model training and evaluation. However, CIFAR-100 introduces a hierarchical structure by grouping the 100 classes into 20 superclasses. Every image in this dataset is assigned two labels: a "fine" label indicating its specific class and a "coarse" label denoting its superclass.
\paragraph{Fashion-MNIST:} Fashion-MNIST, an image dataset by Zalando, comprises 60,000 training examples and 10,000 test examples, each depicting grayscale images sized 28x28. These images are classified into 10 distinct categories. Zalando's aim is to offer Fashion-MNIST as a direct substitute for the original MNIST dataset, enabling seamless benchmarking of machine learning algorithms. Notably, Fashion-MNIST maintains the same image dimensions and split structure as MNIST, facilitating effortless comparisons between models.

\paragraph{Dermamnist:}The Dermamnist dataset is part of the MedMNIST collection and is designed for the classification of skin lesion images. It contains a diverse set of images representing various skin conditions, annotated with corresponding labels for seven different dermatological diseases. The dataset is intended for use in medical image analysis and machine learning research, providing a benchmark for evaluating models in the context of dermatology.

\section{Feature selections models}
In this section we introduce different feature selection methods such as SVD, ICA, PCA and Convolutional Neural Network (CNN) based Encoders. 
\begin{itemize}
    \item{\bf Singular Value Decomposition (SVD)}  Singular value decomposition (SVD) is a method of representing a matrix as a series of linear approximations that expose the underlying meaning-structure of the matrix. The goal of SVD is to find the optimal set of factors that best predict the outcome. SVD has been used to find the underlying meaning of terms in various documents. SVD reduces the overall dimensionality of the input matrix to a lower dimensional space (a matrix of much smaller size with fewer variables), where each consecutive dimension represents the largest degree of variability (between terms and documents) possible \cite{201273}. 
    
    \item {\bf Independent Component Analysis (ICA)} Independent Component Analysis (ICA) is a statistical method used to identify hidden factors of random variables. It is a linear generative model which assumes the observed variables are a linear mixture of unknown non Gaussian and mutually independent variables. The aim of ICA is to find those variables without making any assumptions about the mixing system. More formally, if the data are represented by the vector $x = (x_1, \ldots , x_n)$ and the independent component by the vector $ = (s_1, \ldots , s_n)$, the aim of ICA is to find a linear transformation W verifying s = Wx and minimizing a function F measuring the statistical independence \cite{HOTEL20182472}

    \item {\bf Principle Component Analysis PCA} Principal component analysis (PCA) is one of the most widely used multivariate analysis techniques. The aim of PCA is to reduce the data to a few characteristic dimensions for visualisation and analysis. This is achieved by calculating a new set of variables, or principal components, each of which is a linear combination of the original variables. The principal components (PCs) are chosen so that the important information in the data is retained in just a few of these new variables, effectively summarising the samples or observations. Better approximations are obtained by using more PCs, where each successive PC is uncorrelated with the previous PCs and expresses as much of the remaining variance as possible. This is achieved from the eigenvalue decomposition of the data covariance matrix with the coefficients for the k'th principal component determined by the eigenvector of the covariance matrix corresponding to the kth largest eigenvalue. Data reduction is achieved by only keeping the first few PCs, which contain most of the information in the data \cite{mckenzie2011analysis}. 

    \item {\bf CNN based Encoders}
    Convolutional autoencoder extends the basic structure of the simple autoencoder by replacing the fully connected layers to convolution layers. Encoder use convolutions and decoders use transposed convolutional layers. CNN based encoders can reduce the less important feature coefficients to zero and the magnitude of the coefficients can be used as feature scores.
    \end{itemize}

{
\small
\printbibliography 

@string(ICML  = {Int. Conf. Mach. Learn.})

@string(ICLR  = {Int. Conf. Learn. Represent.})

@string(AAAI  = {AAAI})

@string(ICML  = {ICML})

@string(ICLR  = {ICLR})

@article{budennyy2022eco2ai,
	title        = {Eco2ai: carbon emissions tracking of machine learning models as the first step towards sustainable ai},
	author       = {Budennyy, S. A. and others},
	year         = 2022,
	journal      = {Doklady Mathematics},
	pages        = {S118--S128}
}

@article{li2018visualizing,
	title        = {Visualizing the loss landscape of neural nets},
	author       = {Li, H. and Xu, Z. and Taylor, G. and Studer, C. and Goldstein, T.},
	year         = 2018,
	journal      = {Advances in neural information processing systems},
	volume       = 31
}

@inproceedings{xie2017aggregated,
	title        = {Aggregated residual transformations for deep neural networks},
	author       = {Xie, S. and Girshick, R. and Doll{\'a}r, P. and Tu, Z. and He, K.},
	year         = 2017,
	booktitle    = {Proceedings of the IEEE conference on computer vision and pattern recognition},
	pages        = {1492--1500}
}

@inproceedings{sandler2018mobilenetv2,
	title        = {Mobilenetv2: Inverted residuals and linear bottlenecks},
	author       = {Sandler, M. and Howard, A. and Zhu, M. and Zhmoginov, A. and Chen, L.-C.},
	year         = 2018,
	booktitle    = {Proceedings of the IEEE conference on computer vision and pattern recognition},
	pages        = {4510--4520}
}

@incollection{goreinov2010good,
	title        = {How to find a good submatrix},
	author       = {Goreinov, S. A. and Oseledets, I. V. and Savostyanov, D. V. and Tyrtyshnikov, E. E. and Zamarashkin, N. L.},
	year         = 2010,
	booktitle    = {Matrix Methods: Theory, Algorithms And Applications: Dedicated to the Memory of Gene Golub},
	publisher    = {World Scientific},
	pages        = {247--256}
}

@book{krizhevsky2009learning,
	title        = {Learning multiple layers of features from tiny images},
	author       = {Krizhevsky, A. and Hinton, G. and others},
	year         = 2009,
	publisher    = {Internet}
}

@inproceedings{killamsetty2021glister,
	title        = {Glister: Generalization based data subset selection for efficient and robust learning},
	author       = {Killamsetty, K. and Sivasubramanian, D. and Ramakrishnan, G. and Iyer, R.},
	year         = 2021,
	booktitle    = {Proceedings of the AAAI Conference on Artificial Intelligence},
	volume       = 35,
	pages        = {8110--8118}
}

@article{mckenzie2011analysis,
	title        = {Analysis of complex mixtures using high-resolution nuclear magnetic resonance spectroscopy and chemometrics},
	author       = {McKenzie, J. S. and Donarski, J. A. and Wilson, J. C. and Charlton, A. J.},
	year         = 2011,
	journal      = {Progress in Nuclear Magnetic Resonance Spectroscopy},
	volume       = 59,
	number       = 4,
	pages        = {336--359}
}

@incollection{201273,
	title        = {Chapter 5 - Text Mining Methodology},
	author       = {Gary Miner and Dursun Delen and John Elder and Andrew Fast and Thomas Hill and Robert A. Nisbet},
	year         = 2012,
	booktitle    = {Practical Text Mining and Statistical Analysis for Non-structured Text Data Applications},
	publisher    = {Academic Press},
	address      = {Boston},
	pages        = {73--89},
	doi          = {https://doi.org/10.1016/B978-0-12-386979-1.00005-0},
	isbn         = {978-0-12-386979-1},
	url          = {https://www.sciencedirect.com/science/article/pii/B9780123869791000050}
}

@article{HOTEL20182472,
	title        = {A review of algorithms for SAW sensors e-nose based volatile compound identification},
	author       = {Olivier Hotel and Jean-Philippe Poli and Christine Mer-Calfati and Emmanuel Scorsone and Samuel Saada},
	year         = 2018,
	journal      = {Sensors and Actuators B: Chemical},
	volume       = 255,
	pages        = {2472--2482},
	doi          = {https://doi.org/10.1016/j.snb.2017.09.040},
	issn         = {0925-4005},
	url          = {https://www.sciencedirect.com/science/article/pii/S0925400517317057},
	keywords     = {Odour recognition, Electronic nose, SAW sensors, Data processing, Machine learning},
	abstract     = {Recent advances in odour sensors have led to the development of new applications; among them, electronic noses have gained major interest and found successful applications in many fields. An electronic nose is a device composed of an array of odour sensors with sensitivity to a wide range of chemical compounds. Reliable electronic nose systems rely on advanced data processing techniques. Among them, machine learning has become a core technique for electronic nose design. In this document, we describe several machine learning algorithms and compare their performances on different features used in state of the art electronic nose systems.}
}

@misc{cifar,
	title        = {CIFAR-10 and CIFAR-100 Datasets},
	author       = {Alex Krizhevsky and Geoffrey Hinton},
	year         = 2009,
	note         = {Accessed: 2024-11-02},
	howpublished = {\url{https://www.cs.toronto.edu/~kriz/cifar.html}}
}

@inproceedings{mathur2023column,
	title        = {Column subset selection and Nystr{\"o}m approximation via continuous optimization},
	author       = {Mathur, Anant and Moka, Sarat and Botev, Zdravko},
	year         = 2023,
	booktitle    = {2023 Winter Simulation Conference (WSC)},
	pages        = {3601--3612},
	organization = {IEEE}
}

@inproceedings{vysogorets2025drop,
  title     = {DRoP: Distributionally Robust Data Pruning},
  author    = {Artem M. Vysogorets and Kartik Ahuja and Julia Kempe},
  booktitle = {Proceedings of the Thirteenth International Conference on Learning Representations (ICLR)},
  year      = {2025},
  note      = {To appear},
  url       = {https://openreview.net/forum?id=fxv0FfmDAg}  % Replace with actual URL if available
}

@inproceedings{killamsetty2021gradmatch,
	title        = {GradMatch: Gradient Matching Based Data Subset Selection for Efficient Deep Model Training},
	author       = {Killamsetty, Krishnavarun and others},
	year         = 2021,
	booktitle    = {ICML}
}

@inproceedings{mirzasoleiman2020coresets,
	title        = {Coresets for Data-efficient Training of Machine Learning Models},
	author       = {Mirzasoleiman, Baharan and Bilmes, Jeff and Leskovec, Jure},
	year         = 2020,
	booktitle    = {ICML}
}

@article{jain2023efficient,
	title        = {Efficient data subset selection to generalize training across models: Transductive and inductive networks},
	author       = {Jain, Eeshaan and Nandy, Tushar and Aggarwal, Gaurav and Tendulkar, Ashish and Iyer, Rishabh and De, Abir},
	year         = 2023,
	journal      = {Advances in Neural Information Processing Systems},
	volume       = 36,
	pages        = {4716--4740}
}

@article{tyrtyshnikov2000incomplete,
	title        = {Incomplete cross approximation in the mosaic-skeleton method},
	author       = {Tyrtyshnikov, E.E.},
	year         = 2000,
	journal      = {Computing},
	volume       = 64,
	number       = 4,
	pages        = {367--380}
}

@article{bottou2018optimization,
	title        = {Optimization for machine learning},
	author       = {Bottou, Léon and Curtis, Frank E and Nocedal, Jorge},
	year         = 2018,
	journal      = {SIAM Review},
	volume       = 60,
	number       = 2,
	pages        = {223--311}
}

@book{golub2013matrix,
	title        = {Matrix Computations},
	author       = {Golub, Gene H and Van Loan, Charles F},
	year         = 2013,
	publisher    = {JHU Press}
}

@article{halko2011finding,
	title        = {Finding structure with randomness: Probabilistic algorithms for constructing approximate matrix decompositions},
	author       = {Halko, Nathan and Martinsson, Per-Gunnar and Tropp, Joel A},
	year         = 2011,
	journal      = {SIAM Review},
	volume       = 53,
	number       = 2,
	pages        = {217--288}
}

@inproceedings{coleman2020selection,
  title={Selection via proxy: Efficient data selection for deep learning},
  author={Coleman, Cody and Yeh, Christina and Mussmann, Stephen and Mirzasoleiman, Baharan and Bailis, Peter and Leskovec, Jure and Liang, Percy and Zaharia, Matei},
  booktitle={Proceedings of the 37th International Conference on Machine Learning (ICML)},
  year={2020},
  pages={2226--2236},
  publisher={PMLR}
}

@inproceedings{ash2020badge,
  title={Deep batch active learning by diverse, uncertain gradient lower bounds},
  author={Ash, Jordan T and Zhang, Chicheng and Krishnamurthy, Akshay and Langford, John and Agarwal, Alekh},
  booktitle={International Conference on Learning Representations (ICLR)},
  year={2020}
}

@inproceedings{zhao2021dataset,
  title={Dataset distillation: A comprehensive review},
  author={Zhao, Boqing and Zhang, Kexin and Liu, Jin and Wang, Bo and Song, Le},
  booktitle={arXiv preprint arXiv:2112.10963},
  year={2021}
}

@inproceedings{lee2024selmatch,
  title={SelMatch: Selection-based Dataset Distillation via Trajectory Matching},
  author={Lee, Kyungmin and Kim, Minseok and Shin, Jinwoo and Moon, Taesup},
  booktitle={International Conference on Learning Representations (ICLR)},
  year={2024}
}

@inproceedings{xia2023moderate,
  title={Moderate Coreset: A Universal Method of Data Selection for Real-world Data-efficient Deep Learning},
  author={Xia, Xiaobo and Liu, Jiale and Yu, Jun and Shen, Xu and Han, Bo and Liu, Tongliang},
  booktitle={International Conference on Learning Representations (ICLR)},
  year={2023}
}

@article{paul2021el2n,
  title={Deep Learning on a Data Diet: Finding Important Examples Early in Training},
  author={Paul, Mansheej and Ganguli, Surya and Dziugaite, Gintare Karolina},
  journal={arXiv preprint arXiv:2107.07075},
  year={2021}
}

@inproceedings{toneva2018forget,
  title={An empirical study of example forgetting during deep neural network learning},
  author={Toneva, Mariya and Sordoni, Alessandro and des Combes, Remi Tachet and Trischler, Adam and Bengio, Yoshua and Gordon, Geoffrey J},
  booktitle={International Conference on Learning Representations},
  year={2018}
}

@misc{tinyimagenet,
  author       = {Li Fei-Fei and Jia Deng and Wei Dong and Richard Socher and Li-Jia Li and Kai Li and Li Fei-Fei},
  title        = {Tiny ImageNet Visual Recognition Challenge},
  year         = {2015},
  howpublished = {\url{https://www.kaggle.com/c/tiny-imagenet}},
  note         = {Accessed: 2025-05-15}
}

@inproceedings{caltech256,
  author       = {Griffin, Gregory and Holub, Alex and Perona, Pietro},
  title        = {Caltech-256 Object Category Dataset},
  booktitle    = {California Institute of Technology},
  year         = {2007},
  howpublished = {\url{http://www.vision.caltech.edu/Image_Datasets/Caltech256/}},
  note         = {Accessed: 2025-05-15}
}

@misc{teneva2025,
    author       = {AndreiChertkov},
    title        = {Teneva: A Fast and Efficient Tensor Decomposition Library},
    year         = {2025},
    howpublished = {\url{https://github.com/AndreiChertkov/teneva}},
    note         = {Accessed: 2025-05-15}
}

@article{fisher1936iris,
    author    = {Ronald A. Fisher},
    title     = {The Use of Multiple Measurements in Taxonomic Problems},
    journal   = {Annals of Eugenics},
    volume    = {7},
    number    = {2},
    pages     = {179--188},
    year      = {1936},
    publisher = {Wiley Online Library},
    doi       = {10.1111/j.1469-1809.1936.tb02137.x}
}
}

\end{document}